\def\paperTitle{Hybrid Functional Maps for Crease-Aware Non-Isometric Shape Matching}
\def\authorBlock{
    Lennart Bastian\textsuperscript{1*} \qquad
    Yizheng Xie\textsuperscript{1*} \qquad
    Nassir Navab\textsuperscript{1} \qquad
    Zorah Lähner\textsuperscript{2,3,4} \\
    \\
    \textsuperscript{1}\, Technical University of Munich, Germany \quad
    \textsuperscript{2}\, University of Siegen, Germany \\
    \textsuperscript{3}\, University of Bonn, Germany  \quad
    \textsuperscript{4}\, Lamarr Institute, Germany \\ 
}
\newif\ifreview 
\newif\ifarxiv \newcommand{\arxiv}{\arxivtrue}
\newif\ifcamera 
\newif\ifrebuttal 
\ifreview \usepackage[review]{cvpr} \fi
\ifarxiv \usepackage[pagenumbers]{cvpr} \fi
\ifrebuttal \usepackage[rebuttal]{cvpr} \fi
\ifcamera \usepackage{cvpr} \fi
\ifcamera \usepackage[accsupp]{axessibility} \fi
\newcommand{\R}[1]{{%
    \textbf{%
        \ifstrequal{#1}{1}{\textcolor{red}{R#1}}{%
        \ifstrequal{#1}{2}{\textcolor{blue}{R#1}}{%
        \ifstrequal{#1}{3}{\textcolor{magenta}{R#1}}{%
        \ifstrequal{#1}{4}{\textcolor{teal}{R#1}}{%
                           \textcolor{cyan}{R#1}%
        }}}}%
    }%
}}
\newtheorem{theorem}{Theorem}[section]
\newtheorem{proposition}[theorem]{Proposition}
\newtheorem{lemma}[theorem]{Lemma}
\renewcommand\footnoterule{%
  \kern-50pt%
  \hrule\@width.4\columnwidth
  \kern2.6pt%
}
\definecolor{yizheng}{rgb}{0.9,0.,0.5}
\definecolor{zorah}{rgb}{0.,0.6,0.3}
\definecolor{ForestGreen}{rgb}{0.13, 0.55, 0.13}
\newcommand{\cmark}{{\color{ForestGreen} \ding{51}}}%
\newcommand{\xmark}{{\color{red} \ding{55}}}%
\definecolor{TRColor}{RGB}{240, 240, 240}
\newcommand{\hlgray}[1]{{\sethlcolor{TRColor}\hl{#1}}}
    \crefname{theorem}{Thm.}{Thms.}
    \crefname{lemma}{Lemma}{Lemmas}
\definecolor{darkgreen}{HTML}{228C21}
\renewcommand{\top}{T}
\newcommand*{\addFileDependency}[1]{
  \typeout{(#1)}
  \@addtofilelist{#1}
  \IfFileExists{#1}{}{\typeout{No file #1.}}
}
\definecolor{cvprblue}{rgb}{0.21,0.49,0.74}
\crefname{section}{Sec.}{Secs.}
\crefname{table}{Table}{Tables}
\crefname{figure}{Fig.}{Figs.}
\begin{document}

\title{\paperTitle}
\author{\authorBlock}
\begin{strip}
  \vspace{-1.5cm}
  \maketitle
  \vspace{-0.5cm}
  \centerline{
  \footnotesize
    \setlength{\tabcolsep}{0pt}
    \includegraphics[width=\textwidth]{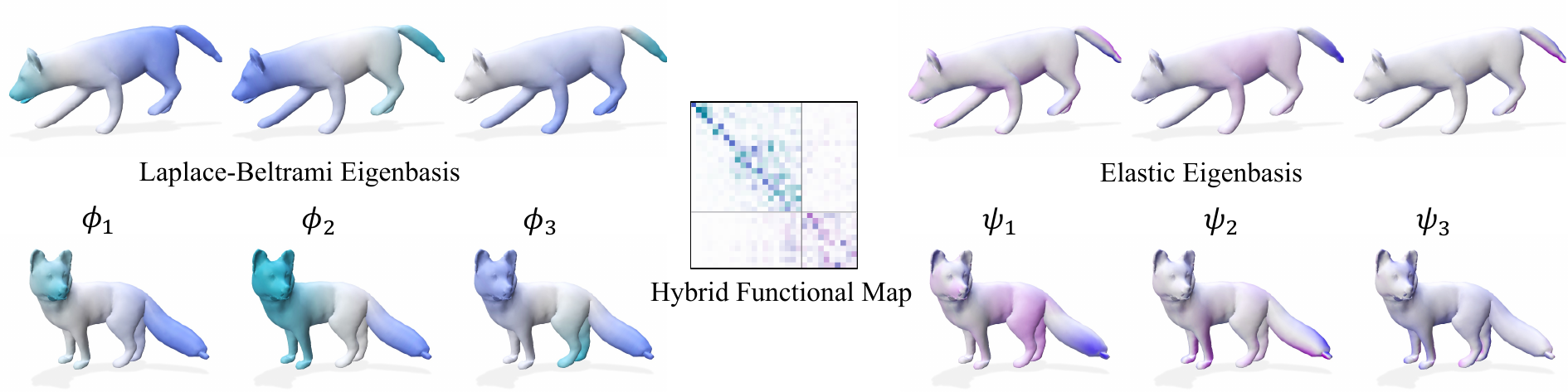}
    \vspace{-0.2cm}
  }
\captionof{figure}{
We propose a novel approach of hybridizing the eigenbases originating from different operators for mapping between function spaces in deformable shape correspondence. 
While the Laplace-Beltrami operator (LBO) eigenbasis is robust to coarse isometric deformations, it fails to encapsulate extrinsic characteristics between shapes. 
In contrast, elastic basis functions~\cite{hartwig_elastic_2023} align with high curvature details but lack the robustness for coarse isometric matching.
The proposed hybrid basis can be used as a drop-in replacement for the LBO basis functions in modern functional map pipelines, improving performance in near-isometric, non-isometric, and topologically noisy settings.
}
\label{fig:teaser}
\end{strip}

{\let\thefootnote\relax\footnote{{
\vspace{-0.2cm}
\parbox{\linewidth}{%
$^*$Equal Contribution\\
\raisebox{-0.2ex}{\Letter} \ : {\tt \scriptsize \{lennart.bastian,yizheng.xie\}@tum.de}
}}} 

\begin{abstract}
\vspace{-0.5cm}
Non-isometric shape correspondence remains a fundamental challenge in computer vision. 
Traditional methods using Laplace-Beltrami operator (LBO) eigenmodes face limitations in characterizing high-frequency extrinsic shape changes like bending and creases. 
We propose a novel approach of combining the non-orthogonal extrinsic basis of eigenfunctions of the elastic thin-shell hessian with the intrinsic ones of the LBO, creating a hybrid spectral space in which we construct functional maps. 
To this end, we present a theoretical framework to effectively integrate non-orthogonal basis functions into descriptor- and learning-based functional map methods.
Our approach can be incorporated easily into existing functional map pipelines across varying applications and can handle complex deformations beyond isometries.
We show extensive evaluations across various supervised and unsupervised settings and demonstrate significant improvements. 
Notably, our approach achieves up to 15\% better mean geodesic error for non-isometric correspondence settings and up to 45\% improvement in scenarios with topological noise.
Code is available at: \href{https://hybridfmaps.github.io/}{https://hybridfmaps.github.io/}
\end{abstract}

\section{Introduction}
\label{sec:intro}

Establishing dense correspondences between 3D shapes is a cornerstone for numerous computer vision and graphics tasks such as object recognition, character animation, and texture transfer.
The complexity of this task varies significantly depending on the nature of the transformation a shape undergoes.
Many classic correspondence methods leverage that rigid transformations can be represented in six degrees of freedom in $\mathbf{R}^3$ and preserve the Euclidean distance between pairs of points.
Iterative closest point (ICP)~\cite{arun1987icp}, and its variations~\cite{li2008nonrigidicp,pomerleau2013icpvariants}, which alternate between transformation and correspondence estimation, are very popular due to their simplicity. 
In this setting, \emph{local} extrinsic surface properties in the embedding space stay invariant under rigid transformations such that they can be used as features during optimization, for example, the change of normals.
For the wider class of isometric deformations (w.r.t. the geodesic distance), the relative embedding of the shape can change significantly, and Euclidean distances between points may not be preserved.
In this class, only \emph{intrinsic} properties -- those that do not depend on a specific embedding of the surface
-- stay invariant, and the correspondence problem becomes much harder due to the quadratic size of the solution space.
For example, solving a quadratic assignment problem preserving geodesic distances \cite{kezurer2015tight} or heat kernel \cite{vestner2017kernel} is \emph{intrinsic} by nature%
, but it is also an NP-hard problem. 

In this context, spectral shape analysis, a generalization of Fourier analysis to Riemannian manifolds, has emerged as a powerful tool for non-rigid correspondence by leveraging intrinsic shape structure.
One popular method that takes advantage of this tool is functional maps, introduced by Ovsjanikov et al.~\cite{ovsjanikov_functional_2012}, which synchronizes the eigenfunctions of the Laplace-Beltrami operator (LBO) through a low-dimensional linear change of basis.
Numerous adaptations have led to advances in shape correspondence in recent years, for example, in both the learned supervised~\cite{litany_deep_2017,donati_deep_2020} and unsupervised settings~\cite{roufosse_unsupervised_2019,li_learning_2022,cao_self-supervised_2023,cao_unsupervised_2023,sun_spatially_2023}; and
while other basis choices have been proposed~\cite{neumann2014compressed,panine2022landmark,hartwig_elastic_2023}, almost all of these methods use the eigenfunctions of the LBO to span the to-be-mapped function spaces. 
One reason is that the LBO has been extensively studied, and the behavior of its eigenfunctions is well understood.
For instance, the LBO's eigenfunctions have a relatively consistent ordering and general invariance under isometric deformations.
These understandings have been leveraged for efficient regularization~\cite{rodola_partial_2015} and coarse-to-fine optimization~\cite{melzi_zoomout_2019,eisenberger_smooth_2020}.
Other basis sets have been studied and shown to be effective in specific cases~\cite{panine2022landmark,colombo2022pcgau}, but none are so generally applicable and flexible as the LBO eigenfunctions.

A known weakness of the LBO basis, which at the same time comes from its biggest strength, is the reduction to low-frequency information.
This leads to efficient optimization and robustness to noise but also inaccuracy in small details (see \cref{fig:motivation_vertex}). 
To counter this challenge, Hartwig et al.~\cite{hartwig_elastic_2023} proposed to utilize a basis derived from the spectral decomposition of an elastic thin-shell energy for functional mapping.
These bases are particularly suitable for aligning extrinsic features of non-isometric deformations, for example, bending and creases~\cite{hartwig_elastic_2023}.
However, due to the non-orthogonality of these basis functions, careful mathematical treatment is required to construct the appropriate optimization problem.
Furthermore, the elastic basis functions do not exhibit the isometric invariance and robustness of the LBO basis functions, limiting their applicability (see~\cref{sec:ablation}).

To address the shortcomings of the bases on their own, we propose to estimate functional maps in a \emph{hybrid} basis representation.
We achieve this by constructing a joint vector space between the LBO basis functions and those of the thin shell hessian energy~\cite{wirth_continuum_2011, hartwig_elastic_2023}.
We demonstrate that combining intrinsic and extrinsic features in this manner provides several advantages for both near-isometric and non-isometric shape-matching problems, promoting robust functional maps that can represent fine creases in the shapes as well as large topological changes.
Due to the principled nature of our approach, the combined basis representation can be used in place of pure LBO basis functions in many functional map-based methods.
We demonstrate this on several of the most strongly performing axiomatic and learning-based pipelines, leading to considerable performance improvements on various challenging shape-matching datasets.

\noindent\textbf{Contributions. } Our contributions are as follows:
\begin{itemize}
    \item We introduce a theoretically grounded framework to estimate functional maps between non-orthogonal basis sets using descriptor-based linear systems, a foundational element of nearly all functional map-based learning methods.%
    \item We propose a hybrid framework for estimating functional maps that leverage the strengths of basis functions originating from different operators.
    We employ this framework to construct functional maps robust to coarse deformations and topological variations while capturing fine extrinsic details on the shape surface.
    \item We conduct an extensive experimental validation establishing a strong case for the proposed hybrid mapping framework in various challenging problem settings, achieving notable improvements upon state-of-the-art methods for deformable correspondence estimation.
\end{itemize}

\section{Related Work}
\label{sec:related}

Shape understanding has been studied extensively; a comprehensive background is beyond the scope of this work.
We refer the reader to one of several recent surveys~\cite{sahillioglu_recent_2020,deng_survey_2022}. 
This section provides an overview of the works most closely related to ours.

\noindent\textbf{Intrinsic-Extrinsic Methods. }
Both intrinsic and extrinsic approaches have advantages and disadvantages, and an optimal method probably uses both. 
Several works combining the functional maps framework with extrinsic features exist, for example, with SHOT descriptors~\cite{salti_shot_2014}, including surface orientation information~\cite{ren_continuous_2018,donati_complex_2021}, anisotropic information~\cite{andreux2014anisotropic}, or spatial smoothness of the point map~\cite{sun_spatially_2023}.
SmoothShells \cite{eisenberger_smooth_2020} uses extrinsic information as a deformation field, aligning the surfaces in a coarse-to-fine approach guided by the frequency information of the LBO eigenfunctions.
These approaches still use the purely intrinsic LBO eigenfunctions to define the functional maps basis, adding extrinsic information through regularization or additional steps.

\noindent\textbf{Functional Maps}.
The functional map framework proposed in~\cite{ovsjanikov_functional_2012} uses the eigenfunctions of the LBO to pose the correspondence problem as a low-dimensional linear system by rephrasing it as a correspondence of basis functions instead of vertices.
The frequency-ordering of the LBO eigenfunctions, as well as their invariance to isometries, allow them to span a comparable but expressive space of smooth functions, which can be efficiently matched by using point descriptors, for example HKS~\cite{sun_concise_2009}, WKS~\cite{aubry_wave_2011} or SHOT~\cite{salti_shot_2014}. 

Follow-up work has been proposed to improve the correspondence quality~\cite{pai_fast_nodate,melzi_zoomout_2019}, extend it to more general settings~\cite{rodola_partial_2015,huang_limit_2019}, and learn to generate optimal descriptors~\cite{litany_deep_2017,halimi_unsupervised_2019,sharp_diffusionnet_2022}.
These methods are particularly powerful as they exploit the structure of the geometric manifolds through the functional correspondence of eigenfunctions on the shapes but still incorporate a learned descriptor to more accurately represent nuances in the shape surface topology.
Unsupervised learning-based approaches have been proven highly effective in recent years~\cite{roufosse_unsupervised_2019,attaiki_dpfm_2021,li_learning_2022,cao_self-supervised_2023,cao_unsupervised_2022,cao_unsupervised_2023}, even surpassing the performance of supervised methods.
Such approaches have not only succeeded on a wide range of computer vision benchmarks but have recently proven effective in the medical domain~\cite{magnet_assessing_2023,bastian_s3m_2023,cao_self-supervised_2023,bastian2023localization}.

\noindent\textbf{Basis Functions}. 
Many improvements have been proposed for the functional map framework, but most methods still use the Laplace-Beltrami eigenfunctions as the underlying basis. 
Despite this, other basis types have been proposed for shape analysis, for example, the L1-regularized spectral basis~\cite{neumann2014compressed}, the landmark-adapted basis~\cite{panine2022landmark}, a basis derived from gaussians~\cite{colombo2022pcgau}, or localized manifold harmonics \cite{melzi2017localized}.
The latter proposed to ``mix" a localized basis with the normal LBO eigenfunctions. 
DUO-FMNet~\cite{donati_deep_2022} proposes calculating an additional functional map for the complex-valued connection Laplacian basis.
However, the basis functions in both cases are orthogonal and purely intrinsic.
Another approach is to learn the optimal basis set for functional maps \cite{marin_correspondence_2020,huang_multiway_2022,siddiqi2023network}, but these tend to not generalize to new applications and, thus, cannot be used out of the box.
Various other extrinsic bases have been proposed \cite{liu2017dirac,aflalo2013scale,wang2018steklov}, but none of these have been demonstrated suitable for functional correspondence.

Recently, Hartwig et al. introduced an elastic basis based on the eigendecomposition of the Hessian of the thin-shell deformation energy for functional maps~\cite{hartwig_elastic_2023}. 
While it preserves some desirable properties of the LBO (like frequency information) and is better suited for detail alignment, our results indicate it does not perform well in learned functional map-based pipelines (c.f. ~\cref{sec:ablation}).
In this work, we analyze the reasons for this and propose a novel way to preserve the advantages of the elastic basis while joining it with the performance of LBO-based approaches.

\section{Background: Functional Maps}
\label{sec:background}

Functional maps \cite{ovsjanikov_functional_2012} offer a compelling framework for shape matching by abstracting point-to-point correspondences $S_1 \to S_2$ to a functional representation between function spaces on manifolds $\mathcal{F}(S_1) \to \mathcal{F}(S_2)$.
This paradigm simplifies the map optimization problem to a linear and compact (low-rank) form, enabling additional regularization.

Until now, the Laplace-Beltrami eigenfunctions have been used almost exclusively as the basis to span the to-be-matched function spaces due to their desirable properties, for example, orthogonality, isometry invariance, and allowing a significant dimensionality reduction.
In~\cref{sub:bg:nonorth}, we will study the more general setting of computing functional maps for non-orthogonal basis sets, an extension of the non-orthogonal ZoomOut~\cite{melzi_zoomout_2019} that has been proposed in~\cite{hartwig_elastic_2023}.
But first, we introduce the default functional map framework.

\noindent \textbf{Spectral Decomposition.} 
A positive semidefinite (p.s.d) linear operator $\mathcal{T}$ (in most cases the LBO, $\Delta$)
is computed on the mesh representation of each shape, followed by solving the generalized eigenvalue problem:
\begin{equation}
\label{eq:generalized_eigenvalue}
\mathcal{T} \phi_i = \lambda_i M \phi_i.
\end{equation}

Ordered by eigenvalues, the first $k$ eigenfunctions $\Phi_k$ can be used as a truncated basis for each shape.
As both $\mathcal{T}$ and the mass matrix of lumped area elements for each shape $M$ are p.s.d., the eigenfunctions are orthogonal w.r.t the norm induced on the vector space by $M$: $\Phi^\top_k M \Phi_k = I$.

\noindent\textbf{Functional Map Estimation.}
Given two point descriptors functions $D_1 \in \mathcal{F}(S_1), D_2 \in \mathcal{F}(S_2)$ which are known to be corresponding, the functional map between two basis sets can be computed via a least-squares problem.
Let $D_{\Phi_i}:= \Phi^\dagger_i D_i$ denote the descriptor functions projected into the LBO eigenfunctions $\Phi_i$ using the Moore-Penrose pseudo inverse $\Phi^\dagger_i$.
We can then compute an optimal functional map by solving the following optimization problem ~\cite{ovsjanikov_functional_2012,donati_deep_2020}:
\begin{align}
\label{eq:linear_lsq}
C^* = \arg \min_C E(C) &= E_{\text{data}}(C) + \lambda E_{\text{reg}}(C) \\ \nonumber
E_{\text{data}}(C) &= \| C D_{\Phi_1} - D_{\Phi_2} \|_F^2 \\ \nonumber
E_{\text{reg}}(C) &= \| C \Lambda_1  -  \Lambda_2 C \|_F^2 \nonumber
\end{align}

\noindent where $\Lambda_1$ a diagonal matrix of the eigenvalues of $\mathcal{T}$~\cite{donati_deep_2020} or the resolvant~\cite{ren_structured_2019}.
This energy can be solved in closed form row-by-row with $k$ least squares problems when defined in the Frobenius norm~\cite{donati_deep_2020}.

Learned features have proven robust for a wide variety of surface representations.
Unless mentioned otherwise, we use deep features from  DiffusionNet~\cite{sharp_diffusionnet_2022} and denote these as $D_i \in \mathbb{R}^{n_i \times d}$ for shapes $S_1$ and $S_2$.

\noindent \textbf{Map Regularization.}
The estimated map can be interpreted as a change of basis between shapes. 
In case of an underdetermined linear system in \cref{eq:linear_lsq} or noisy descriptor function, 
$C$ can be further regularized with losses that promote orthogonality, bijectivity, isometry, or additional pointwise descriptor preservation \cite{roufosse_unsupervised_2019,donati_deep_2020,cao_unsupervised_2023}.
If the regularizer is in a simple quadratic form, it can be backpropagated through and used to train the descriptor functions.

\subsection{Non-Orthogonal Basis Functions} \label{sub:bg:nonorth}
Wirth et al.~\cite{wirth_continuum_2011} originally proposed an elastic thin-shell energy for spectral analysis.
Hartwig et al.~\cite{hartwig_elastic_2023} then recently demonstrated how the spectral decomposition of this elastic deformation energy can be used for functional mapping despite being non-orthogonal~\cite{hartwig_elastic_2023}.
The elastic energy $\mathcal{W}_S[\mathbf{f}]$ consists of a membrane contribution $\mathcal{W}_{\text{mem}}$, which measures the local distortion of the surface, and bending energy $\mathcal{W}_{\text{bend}}$ encapsulating curvature (c.f. appendix for a complete definition).
By construction, the semi-positive definite hessian of the elastic deformation energy can be decomposed at the identity as in~\cref{eq:generalized_eigenvalue}, yielding a set of eigenfunctions.
These vector-valued eigenmodes are suitable for functional mapping after projection onto the vertex-wise normals of the surface and selecting the first $k$ non-orthogonal basis functions $\Psi = [\psi_1,...,\psi_k ] \in \mathbf{R}^{n, k}$~\cite{hartwig_elastic_2023}.

Much of the simplicity of the functional maps framework can be attributed to the orthogonality of the basis functions w.r.t the mass matrices on each shape.
The mass matrix accounts for the anisotropic metric on the non-Euclidean shape manifolds which must be observed for common operators such as the inner product $\langle \cdot, \cdot \rangle_M$ and norm $||\cdot||_M$.
The reduced mass representation $M_{k} = \Psi^\top M \Psi \in \mathbf{R}^{k \times k}$ can be used to construct a metric in the spectral space of each shape.
Notably, these operations reduce to the standard inner product when represented in the orthogonal LBO basis.

However, this is not the case for non-orthogonal basis functions, and careful treatment must be taken to avoid neglecting the anisotropic metric.
Hartwig et al.~\cite{hartwig_elastic_2023} derive the necessary operations, such as the orthogonal projector, and reformulate optimization problems to use the elastic basis in the ZoomOut~\cite{melzi_zoomout_2019} framework for functional map refinement.
For a thorough treatment of these fundamental definitions, we refer the reader to the relevant literature~\cite{wirth_continuum_2011,hartwig_elastic_2023}.
Our method requires several additional operations and losses to utilize the elastic basis in a learned setting, including the formulation in \cref{eq:linear_lsq}, which we will derive in \cref{sec:method}.

\begin{figure}[t]
  \centering
  \vspace{-0.1cm}
  \includegraphics[width=0.5\textwidth]{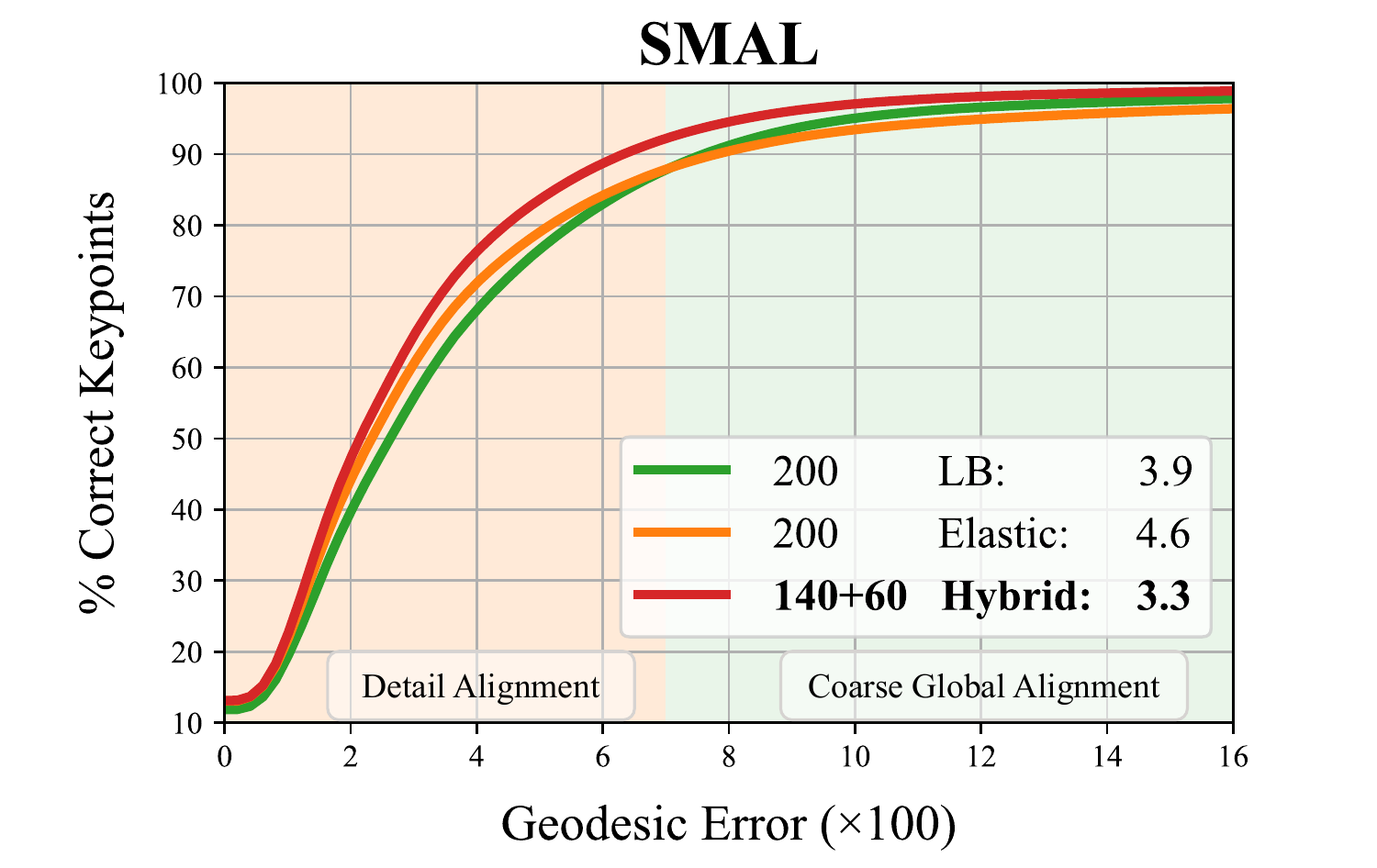}
  \vspace{-0.8cm}
  \caption{A Percentage-Correct-Keypoint ablation between the pure LB basis, pure elastic basis (orthogonalized), and our hybrid approach at the same spectral resolution ($k = 200$). The elastic basis attains better detail alignment than the LB basis but yields inferior overall global correspondences. The proposed hybrid approach achieves the best of both worlds.
  Experiments are conducted with the ULRSSM \cite{cao_unsupervised_2023} framework on SMAL.
  }
  \vspace{-0.5cm}
  \label{fig:motivation_pck}
\end{figure}

\begin{figure}[t]
  \centering
  \vspace{-0.1cm}
  \includegraphics[width=0.5\textwidth]{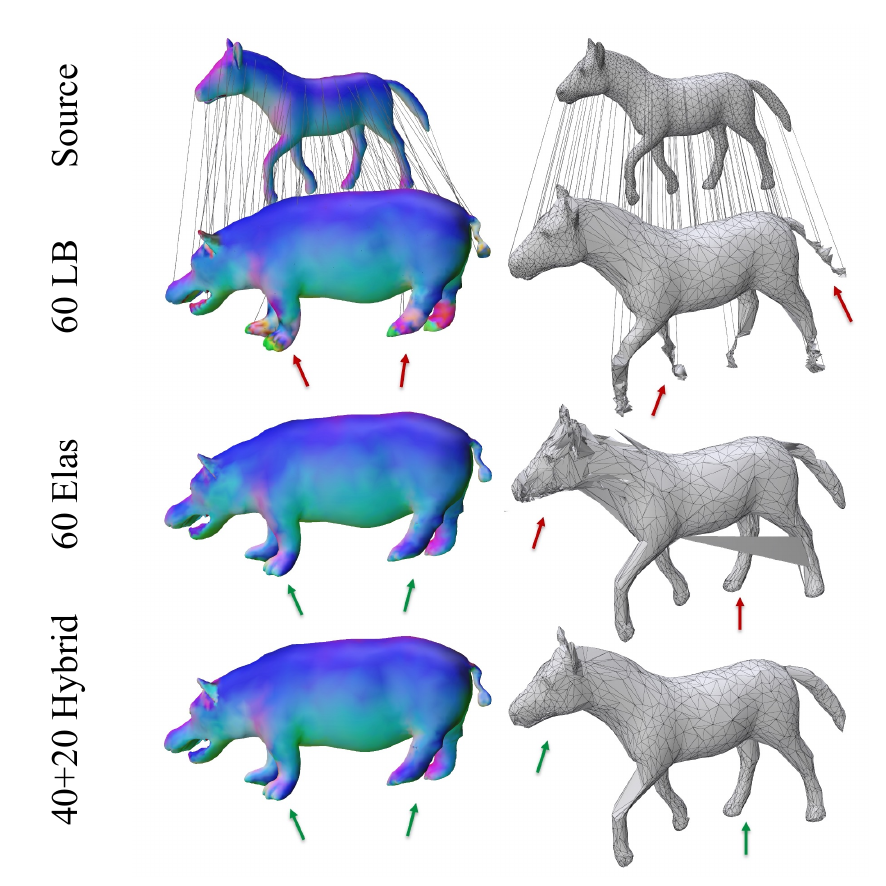}
  \vspace{-0.8cm}
  \caption{Correspondence quality visualized through the transfer of normals (left) and vertex positions (right) from the source to the target shape. We compare the results from LB, Elastic, and Hybrid basis functions by encoding and recovering \textbf{ground truth} correspondences through functional map representations at a spectral resolution of $k = 60$. Additional results at a spectral resolution of $k = 200$ are provided in the supplementary.
  }

  \vspace{-0.5cm}
  \label{fig:motivation_vertex}
\end{figure}

\section{Method: A Hybrid Approach}
\label{sec:method}

The LBO eigenbasis is the predominant choice in functional map-based~\cite{ovsjanikov_functional_2012} approaches due to their robustness and invariance to isometric deformations, but they tend to struggle with aligning high-frequency details.
On the other hand, the recently proposed elastic basis functions have proven effective at representing extrinsic creases and bending~\cite{hartwig_elastic_2023} (see ~\cref{fig:motivation_pck}).
However, we observed that naively replacing the LBO basis with the elastic basis does not always improve performance, particularly in learning-based frameworks (see ~\cref{table:ablation}).

To overcome the deficiencies of both basis choices, we propose constructing functional maps between \emph{hybrid spaces consisting of the LBO and elastic basis functions}.
This attains the best of both worlds: a stable, isometric functional map at low frequencies and sensitivity to extrinsic creases and high-curvature details.
To achieve this, we generalize the deep functional maps framework outlined in~\cref{sec:background} to non-orthogonal basis functions in \cref{sec:hilbert}, then introduce the hybrid functional map estimation in~\cref{sec:hybrid_basis}, and discuss necessary adjustments for learning pipelines in~\cref{sec:learning_hybrid}.

\subsection{Generalization to the Hilbert-Schmidt Norm}\label{sec:hilbert}

In this section, we will generalize \cref{eq:linear_lsq} to functional maps between non-orthogonal basis sets, for example, the elastic basis~\cite{hartwig_elastic_2023}.
For an orthogonal basis, \cref{eq:linear_lsq} can be written with the Frobenius norm in spectral space. 
For non-orthogonal basis functions, this requires using an inner product induced by the mass matrices on each shape ~\cite{hartwig_elastic_2023}; norms to measure distances in each Hilbert space or the magnitude of linear operators must be scaled similarly.

\textbf{Data Term.} The original formulation of \cref{eq:linear_lsq} takes the difference of the descriptors $D_1, D_2$ as functions on the surface using the $S_2$ inner product; this reduces to the standard inner product in spectral space for the LBO eigenfunctions.
For non-orthogonal basis sets, the spectral space is a Hilbert space equipped with an inner product induced by the reduced mass matrix $M_{k,2} = \Psi_2^\top M_2 \Psi_2$.
The data term then reads:
\begin{lemma}
\label{lemma:edata_induced_norm}
The descriptor preservation term $E_\text{data}$ can be represented in the norm induced by $M_{k,2}$ as:
\begin{equation}
\Vert CD_{\Psi_1} - D_{\Psi_2} \Vert_{M_{k,2}} = {\Vert \sqrt{M_{k,2}} (CD_{\Psi_1} - D_{\Psi_2})  \Vert}_F
\end{equation}
\end{lemma}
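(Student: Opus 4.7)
The plan is to unpack both sides of the claimed identity as traces and reduce them to the same expression via the symmetric square root of $M_{k,2}$. Let $A := CD_{\Psi_1} - D_{\Psi_2}$, viewed as a matrix whose columns live in the Hilbert space $(\mathbb{R}^k, \langle \cdot, \cdot \rangle_{M_{k,2}})$, one column per descriptor. The norm $\Vert \cdot \Vert_{M_{k,2}}$ on such a matrix is the Hilbert--Schmidt norm induced by the inner product on columns, i.e., $\Vert A \Vert_{M_{k,2}}^2 = \sum_j \langle A_{:,j}, A_{:,j} \rangle_{M_{k,2}} = \sum_j A_{:,j}^\top M_{k,2} A_{:,j} = \operatorname{tr}(A^\top M_{k,2} A)$.

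Next, I would use that $M_{k,2} = \Psi_2^\top M_2 \Psi_2$ is symmetric positive (semi-)definite, hence admits a symmetric square root $\sqrt{M_{k,2}}$ satisfying $\sqrt{M_{k,2}}^\top \sqrt{M_{k,2}} = \sqrt{M_{k,2}}\sqrt{M_{k,2}} = M_{k,2}$. Substituting into the trace and using the cyclic property yields
\begin{equation*}
\operatorname{tr}(A^\top M_{k,2} A) = \operatorname{tr}\bigl( A^\top \sqrt{M_{k,2}}^\top \sqrt{M_{k,2}} A \bigr) = \operatorname{tr}\bigl( (\sqrt{M_{k,2}} A)^\top (\sqrt{M_{k,2}} A) \bigr) = \Vert \sqrt{M_{k,2}} A \Vert_F^2.
\end{equation*}
Taking square roots and substituting back $A = CD_{\Psi_1} - D_{\Psi_2}$ gives the stated identity.

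The only subtlety is justifying the existence of the symmetric square root; this follows because $M_2$ is a (lumped) mass matrix, hence symmetric positive definite, and $\Psi_2$ has full column rank by construction of the basis, so $M_{k,2} = \Psi_2^\top M_2 \Psi_2$ is symmetric positive definite and admits a unique symmetric positive definite square root via its spectral decomposition. (If one only has semi-definiteness, the same argument still goes through with the positive semidefinite square root.) I do not anticipate any genuine obstacle here — the statement is essentially a change-of-variables identity expressing a weighted Hilbert--Schmidt norm as a standard Frobenius norm after a linear reweighting, and the proof is a one-line trace manipulation once the norm is expanded column-wise.
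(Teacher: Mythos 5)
Your proposal is correct and follows essentially the same route as the paper's proof: expand the induced norm as $\operatorname{tr}(A^\top M_{k,2} A)$, split $M_{k,2}=\sqrt{M_{k,2}}\sqrt{M_{k,2}}$ using its symmetry, and recognize the Frobenius norm of $\sqrt{M_{k,2}}A$. Your extra remark justifying the existence of the symmetric positive (semi-)definite square root is a fine addition but not a point of divergence.
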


\noindent %
We include a derivation in the appendix for completeness.

\textbf{Regularizer.} Next, we derive $E_{\text{reg}}$ which ensures the functional map $C$ commutes with the diagonal matrix of eigenvalues $\Lambda_i$ of the respective linear operator ~\cite{ovsjanikov_functional_2012, donati_deep_2020} or its resolvant \cite{ren_structured_2019}.
A key to functional map formulation of Hartwig et al.~\cite{hartwig_elastic_2023} is the use of the Hilbert-Schmidt norm for linear operators between Hilbert spaces, as it considers the geometry on both the domain \textit{and} range of the operator as opposed to the Frobenius norm.
We note that the term $E_{\text{reg}}$ measures the magnitude of the operator $(C \Lambda_1  -  \Lambda_2 C): \mathcal{F}(S_1) \rightarrow \mathcal{F}(S_2)$, and should therefore take into account the anisotropic metrics on each space.

\begin{figure*}[t]
  \centering
  \vspace{-0.2cm}
  \includegraphics[width=\linewidth]{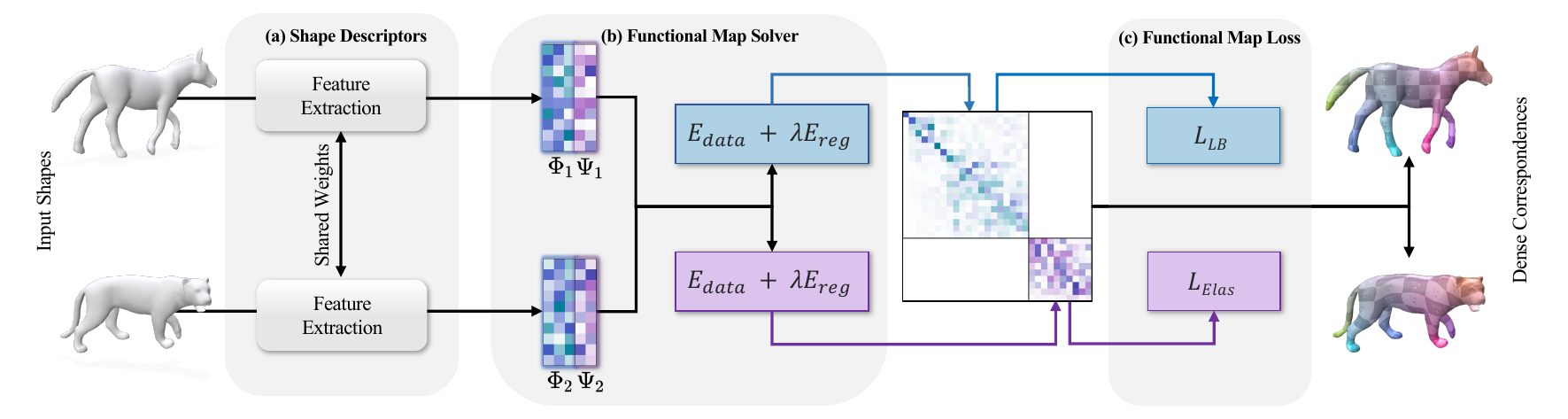}
  \vspace{-0.7cm}
  \caption{\textbf{Hybrid Functional Maps} in a typical pipeline. 
  Features are first extracted from a pair of shapes with a Siamese network (a). 
  They are then projected onto eigenbasis sets from different linear operators (b).
  We then solve for a block diagonal functional map spanning the constructed hybrid function space (b). 
  Additional regularization can be used to impose structure on parts of the hybrid functional map (c). 
  }
  \label{fig:framework}
  \vspace{-0.2cm}
\end{figure*}

\begin{proposition}
\label{thm:ereg_hs_norm}
The regularization term $E_\text{reg}$ can be formulated in the Hilbert-Schmidt norm as: 
\begin{align}
\lVert C \Lambda_1  -  \Lambda_2 C \rVert_{HS} = {\lVert \sqrt{M_{k,2}}(C \Lambda_1  -  \Lambda_2 C)\sqrt{M_{k,1}^{-1}} \rVert}_F \nonumber
\end{align}

This problem can be expanded to a $k^2 \times k^2$ linear system:

\begin{align}
{\lVert ((\Lambda_1 \sqrt{M_{k,1}^{-1}}) \otimes \sqrt{M_{k,2}} - \sqrt{M_{k,1}^{-1}} \otimes (\sqrt{M_{k,2}} \Lambda_2)) \vec{C} \rVert}_2 \nonumber
\end{align}

\noindent with the Kroneker product $\otimes$, and $\vec{C} = vec(C)$ the column stacked vectorization of $C$, and using Lemma~\ref{lemma:edata_induced_norm}. This system can be solved in closed form.
\end{proposition}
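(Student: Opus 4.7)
The proof naturally splits into two parts: (i) establishing the Hilbert-Schmidt to weighted-Frobenius identity, and (ii) vectorizing the resulting expression into a $k^2\times k^2$ linear system via a Kronecker-product calculation. Throughout I will use that $M_{k,1}, M_{k,2}$ are symmetric positive definite (they are Gram matrices of the elastic basis against the lumped mass), so their principal square roots are symmetric and invertible, and the diagonal eigenvalue matrices $\Lambda_1, \Lambda_2$ commute with nothing in particular but transpose to themselves.

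For (i), I would start from the definition of the Hilbert-Schmidt norm of a linear map $A : (\mathbf{R}^k, M_{k,1}) \to (\mathbf{R}^k, M_{k,2})$, namely $\lVert A \rVert_{HS}^2 = \sum_i \langle A e_i, A e_i\rangle_{M_{k,2}}$ for an $M_{k,1}$-orthonormal basis $\{e_i\}$. A convenient choice is to take the columns of $E := \sqrt{M_{k,1}^{-1}}$, which satisfies $E^{\top} M_{k,1} E = I$ because $\sqrt{M_{k,1}^{-1}}$ is symmetric. Then
\begin{equation}
\lVert A \rVert_{HS}^2 = \mathrm{tr}\bigl(E^{\top} A^{\top} M_{k,2} A E\bigr) = \mathrm{tr}\bigl(B^{\top} B\bigr) = \lVert B \rVert_F^2,
\end{equation}
where $B := \sqrt{M_{k,2}}\, A\, \sqrt{M_{k,1}^{-1}}$. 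Setting $A = C\Lambda_1 - \Lambda_2 C$ yields the first claimed identity. This step is essentially the change-of-basis trick familiar from the LBO setting, specialized to the non-orthogonal case.

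For (ii), I would split $B = \sqrt{M_{k,2}}\, C\, \Lambda_1 \sqrt{M_{k,1}^{-1}} - \sqrt{M_{k,2}}\, \Lambda_2\, C\, \sqrt{M_{k,1}^{-1}}$ and apply the standard vectorization identity $\mathrm{vec}(PXQ) = (Q^{\top} \otimes P)\, \mathrm{vec}(X)$ to each summand. For the first term, $P = \sqrt{M_{k,2}}$ and $Q = \Lambda_1 \sqrt{M_{k,1}^{-1}}$; using that $\sqrt{M_{k,1}^{-1}}$ is symmetric and $\Lambda_1$ is diagonal gives the Kronecker factor $(\Lambda_1 \sqrt{M_{k,1}^{-1}}) \otimes \sqrt{M_{k,2}}$. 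For the second term, $P = \sqrt{M_{k,2}}\,\Lambda_2$ and $Q = \sqrt{M_{k,1}^{-1}}$, producing $\sqrt{M_{k,1}^{-1}} \otimes (\sqrt{M_{k,2}}\,\Lambda_2)$. Linearity of $\mathrm{vec}$ and the identity $\lVert B \rVert_F = \lVert \mathrm{vec}(B) \rVert_2$ then deliver the stated $k^2\times k^2$ system.

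The calculation is mostly bookkeeping, and I expect the main source of friction to be tracking transposes and orderings inside the Kronecker products, since $\Lambda_i$ and $\sqrt{M_{k,i}^{-1}}$ do not commute in general; this is resolved cleanly by first isolating the symmetric square roots and then invoking diagonality of $\Lambda_i$ only where $\Lambda_i^{\top} = \Lambda_i$ is actually needed. Closed-form solvability of the combined problem (data term from Lemma~\ref{lemma:edata_induced_norm} plus the regularizer above) then follows because, once vectorized, both energies are quadratic in $\vec{C}$ and the total objective reduces to a standard linear least-squares problem $\lVert M \vec{C} - b \rVert_2^2$, which admits a normal-equation (or QR) solution.
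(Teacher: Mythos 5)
Your proof takes essentially the same route as the paper: the Hilbert--Schmidt-to-weighted-Frobenius identity (which you derive from an $M_{k,1}$-orthonormal basis, while the paper simply invokes the trace identity $\lVert A\rVert_{HS}^2=\mathrm{tr}(M_{k,1}^{-1}A^{\top}M_{k,2}A)$ from Hartwig et al.), followed by the vectorization identity $\mathrm{vec}(PXQ)=(Q^{\top}\otimes P)\,\mathrm{vec}(X)$ applied to each summand and the observation that the combined quadratic objective is an ordinary least-squares problem in $\mathrm{vec}(C)$. One caveat: strictly $(\Lambda_1\sqrt{M_{k,1}^{-1}})^{\top}=\sqrt{M_{k,1}^{-1}}\Lambda_1$, and diagonality of $\Lambda_1$ does not let it commute past the non-diagonal symmetric square root, so the first Kronecker factor should rigorously be $(\sqrt{M_{k,1}^{-1}}\Lambda_1)\otimes\sqrt{M_{k,2}}$ --- but the paper's own statement and proof use the same loose ordering, so this does not distinguish your argument from theirs.
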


\begin{proof}
The first statement follows from the definition of the HS-norm, using the cyclicity of the trace and equivalence with the scaled Frobenius norm. 
A detailed discussion regarding how to reformulate this optimization problem in the expanded form can be found in the appendix.
\end{proof}

It was previously shown that the formulation of $E(C)$ in the Frobenius norm admits a closed-form solution~\cite{donati_deep_2020,ren_structured_2019}.
This is crucial to the deep functional maps pipeline; we found that solving iteratively with a differentiable convex optimizer proves prohibitively expensive when solved to sufficient accuracy for the outer SGD iteration to converge.
We therefore solve~\cref{eq:linear_lsq} in the expanded form to consider the anisotropic metrics on $\mathcal{F}(S_1)$ and $\mathcal{F}(S_2)$.
However, the expanded $k^2 \times k^2$ system becomes prohibitively large at $k=200$, the spectral resolution typically used in advanced functional maps pipelines.
In the next section, we show that separating the functional map optimization in~\cref{eq:linear_lsq} into two problems under mild assumptions effectively resolves this issue and enables proper regularization for practical applications.

\begin{table*}
    \setlength{\tabcolsep}{7pt}
    \small
    \centering
    \caption{\textbf{Shape correspondence estimation} under various conditions, including isometric, non-isometric, and settings with topological noise. The \hlgray{proposed hybrid approach} yields performance improvements in axiomatic, supervised, and unsupervised settings.\\
    $\dag$ SHREC'19 methods are trained on FAUST and SCAPE as in recent methods~\cite{li_learning_2022,cao_unsupervised_2023}.}
    \vspace{-0.2cm}
    \label{tab:main_results}
    \begin{tabular}{@{}c|l|ccc|ccc|c|@{}}
    \toprule
    
    & \multicolumn{1}{l|}{\multirow{2}{*}{\textbf{Geodesic Error ($\times$100)}}} & \textbf{FAUST} & \textbf{SCAPE} & $\textbf{SHREC'19}^{\dag}$ & \textbf{SMAL} & \multicolumn{2}{c|}{\textbf{DT4D-H}} & \textbf{TOPKIDS} \\
    & \multicolumn{1}{l|}{} & & & & & \textbf{intra-class} & \textbf{inter-class} & \\
    \midrule
    \parbox[t]{2mm}{\multirow{4}{*}{\rotatebox[origin=c]{90}{\textit{Axiomatic}}}}
    & ZoomOut~\cite{melzi_zoomout_2019} & 6.1 & 7.5 & - & 38.4 & 4.0 & 29.0 & 33.7 \\
    & DiscreteOp~\cite{ren_discrete_2021} & 5.6 & 13.1 & - & 38.1 & 3.6 & 27.6 & 35.5 \\
    & Smooth Shells~\cite{eisenberger_smooth_2020} & \textbf{2.5} & 4.2 & - & 30.0 & \textbf{1.2} & 6.4 & 10.8 \\ 
    &\cellcolor{TRColor}{Hybrid Smooth Shells (ours)} & \cellcolor{TRColor}{2.6} & \cellcolor{TRColor}{4.2} & \cellcolor{TRColor}{-} & \cellcolor{TRColor}{\textbf{28.4}} & \cellcolor{TRColor}{1.3} & \cellcolor{TRColor}{\textbf{5.7}} & \cellcolor{TRColor}{\textbf{7.5}} \\
    \midrule
    \parbox[t]{2mm}{\multirow{3}{*}{\rotatebox[origin=c]{90}{\textit{Sup.}}}}
    & FMNet~\cite{litany_deep_2017} & 11.0 & 33.0 & - & 42.0 & 9.6 & 38.0 & - \\
    & GeomFMaps~\cite{donati_deep_2020} & 2.6 & 3.0 & 7.9 & 8.4 & \textbf{1.9} & 4.2 & - \\
    &\cellcolor{TRColor}{Hybrid GeomFMaps (ours)} & \cellcolor{TRColor}{\textbf{2.4}} & \cellcolor{TRColor}{\textbf{2.8}} & \cellcolor{TRColor}{\textbf{5.6}} & \cellcolor{TRColor}{\textbf{7.6}} & \cellcolor{TRColor}{2.2} & \cellcolor{TRColor}{\textbf{4.1}} & \cellcolor{TRColor}{-} \\
    \midrule
    \parbox[t]{2mm}{\multirow{7}{*}{\rotatebox[origin=c]{90}{\textit{Unsupervised}}}}
    & Deep Shells~\cite{eisenberger_deep_2020} & 1.7 & 2.5 & 21.1 & 29.3 & 3.4 & 31.1 & 13.7 \\
    & DUO-FMNet~\cite{donati_deep_2022} & 2.5 & 4.2 & 6.4 & 6.7 & 2.6 & 15.8 & - \\
    & AttentiveFMaps-Fast~\cite{li_learning_2022} & 1.9 & 2.1 & 6.3 & 5.8 & 1.2 & 14.6 & 28.5 \\
    & AttentiveFMaps~\cite{li_learning_2022} & 1.9 & 2.2 & 5.8 & 5.4 & 1.7 & 11.6 & 23.4 \\
    & SSCDFM~\cite{sun_spatially_2023} & 1.7 & 2.6 & 3.8 & 5.4 & 1.2 & 6.1 & - \\
    & ULRSSM~\cite{cao_unsupervised_2023} & 1.6 & 1.9 & 4.6 & 3.9 & \textbf{0.9} & 4.1 & 9.2 \\
     &\cellcolor{TRColor}{Hybrid ULRSSM (ours)} & \cellcolor{TRColor}{\textbf{1.5}} & \cellcolor{TRColor}{\textbf{1.8}} & \cellcolor{TRColor}{\textbf{3.6}} & \cellcolor{TRColor}{\textbf{3.3}} & \cellcolor{TRColor}{1.0} & \cellcolor{TRColor}{\textbf{3.5}} & \cellcolor{TRColor}{\textbf{5.0}} \\
    \bottomrule
    \end{tabular}
    \vspace{-0.2cm}
\end{table*}

\subsection{Hybrid Functional Map Estimation}
\label{sec:hybrid_basis}
Our experiments suggest that although the elastic basis performs sub-optimally compared to the LB basis in deep-learning settings, it achieves a higher percentage of matches at a low geodesic error threshold, suggesting superior alignment of fine details and creases (see~\cref{fig:motivation_pck}).
Motivated by this, we propose constructing a hybrid basis by combining basis functions from both operators.
Intuitively, the low-frequency LBO eigenfunctions approximate the shape and enable coarse alignment, while the elastic eigenfunctions conform to creases and regions of high curvature.
In this hybrid function space, a functional map $C$  is articulated as a block matrix, with each entry $C^{ij}$ encoding the correspondence between two basis sets (see \cref{fig:teaser}).

\begin{equation}
\label{eq:block_matrix_formulation}
C = \begin{pmatrix}
C^{11} & C^{12} \\
C^{21} & C^{22}
\end{pmatrix}
\end{equation}

$C^{11}$ and $C^{22}$ correspond to intra-basis maps and the off-diagonal blocks $C^{12}$ and $C^{21}$ to inter-basis maps.
The resulting hybrid map can be used directly to obtain dense point-to-point correspondences via nearest neighbor search in the hybrid basis or via map refinement strategies~\cite{melzi_zoomout_2019}.

We observe that, while mutually non-orthogonal, the LBO and elastic eigenbasis exhibit very different behaviors on the shape, and therefore assume that inter-basis maps are undesirable (enforcing $C^{12} = \mathbf{0} \ \text{and} \ C^{21} = \mathbf{0}$).
\cref{eq:block_matrix_formulation} then separates into two optimization problems (one for each basis type), resulting in a block diagonal functional map (see~\cref{fig:framework}).
In the appendix, we support this with mathematical intuition and empirically show that inter-basis matchings adversely affect the convergence of the map.

The combined framework enables the application of the hybrid basis to large functional map systems (e.g. $k=200$) with proper adaptation for a non-orthogonal basis, combining the benefits of different basis types and exceeding the performance of each basis when used individually (see~\cref{fig:motivation_pck}).

\subsection{Learning in a Hybrid Basis}
\label{sec:learning_hybrid}

Deep functional map pipelines regularize the FM obtained from~\cref{eq:linear_lsq} with additional loss functions.
These are described in detail in~\cref{sec:experimental} for each specific pipeline.
Similar to~\cref{sec:hybrid_basis}, we note that each loss can be separated in our block diagonal hybrid formulation (c.f. appendix for details).

The isometric invariance of the LBO loss functions provides a strong supervision signal for unsupervised FM methods.
As the elastic basis functions lack these properties, training from scratch with various architectures leads to suboptimal convergence.
We, therefore, parameterize the optimization in \cref{eq:fm_loss} through linear annealing during training:
\begin{align}
\label{eq:fm_loss}
\mathcal{L}(C) = \mathcal{L}_{\text{LB}}(C) + \mu \mathcal{L}_{\text{Elas}}(C)
\end{align}
Intuitively, this favors coarse isometric matching early on during training with the LBO eigenfunctions and leverages the tendency of the elastic basis to align creases and details later for optimal convergence.
Empirically, we find this achieves superior performance compared to fine-tuning from LBO pre-trained descriptors, which likely converges to local minima near the LBO optimum.

\section{Experimental Results} 
\label{sec:experimental}
This section provides a summary of the datasets used and our experimental setup. 
We refer to the appendix for a complete description of the datasets, splits, hyperparameters, and reformulation of method-specific losses in the HS-norm. 
We use $k=k_{\text{LB}}+k_{\text{Elas}}$ to signify the total spectral resolution, the number of LBO and elastic basis functions, respectively. 

\subsection{Datasets}
We evaluate our method on several challenging benchmarks encompassing \textit{near-isometric} (FAUST~\cite{bogo_faust_2014}, SCAPE~\cite{anguelov_scape_2005}, SHREC~\cite{melzi_shrec_2019}, DeformingThings4D intra-~\cite{li_4dcomplete_2021}), \textit{non-isometric} (SMAL~\cite{zuffi_3d_2017}, DeformingThings4D inter-~\cite{li_4dcomplete_2021}), and \textit{topologically noisy} (TOPKIDS~\cite{lahner_shrec16_2016}) settings.
We use the more challenging re-meshed versions as established~\cite{donati_deep_2022,cao_unsupervised_2023}.

\subsection{Hybrid Basis in Different Frameworks}

To understand the efficacy of the proposed hybrid basis in various methodological settings, we use it instead of the LBO basis in three different methods spanning supervised (GeomFMaps~\cite{donati_deep_2020}), unsupervised (ULRSSM~\cite{cao_unsupervised_2023}), and axiomatic settings (Smooth Shells~\cite{eisenberger_smooth_2020}).
Due to inherent variability, we reproduce each experiment 5 times in both the baseline (LBO) and hybrid configuration, reporting the best results consistent with standard practices.
The total number of basis elements $k$ is kept fixed per method for all experiments; we replace only the highest-frequency LBO eigenfunctions with the elastic basis functions corresponding to the smallest eigenvalues.
Quantitative experimental results (c.f. \cref{tab:main_results}) are organized into sections (supervised, unsupervised, axiomatic), where we compare to competitive methods in the same category. 
Qualitative results are shown in \cref{fig:qualitative_topkids}, \cref{fig:qualitative_result_vertex} and in the supplementary.

\begin{figure*}[t]
  \centering
  \vspace{-0.3cm}
  \includegraphics[width=\linewidth]{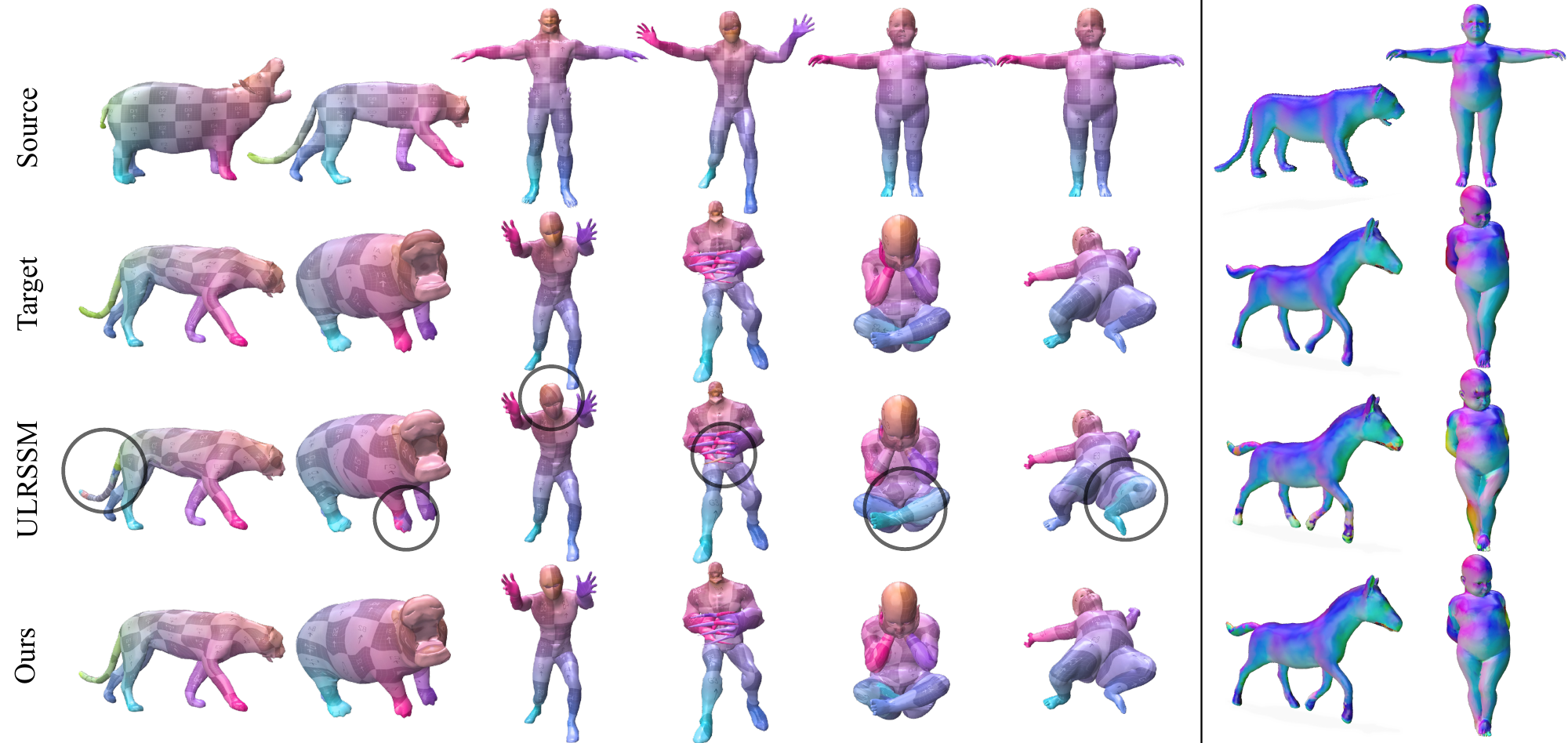}
  \vspace{-0.5cm}
  \caption{\textbf{Qualitative Results on SMAL, DT4D-H, and TOPKIDS.} Comparison of ULRSSM in the LBO basis and in the proposed hybrid basis. 
  Hybrid functional maps yield higher-quality correspondences, particularly under topological noise. 
  ULRSSM in the LB basis frequently creates coarse mismatches such as incorrectly assigning appendages, whereas the elastic basis better represents these details. The first six columns show texture transfer. The last columns transfer normals making the less accurate alignment of creases in ULRSSM visible.  }
  \label{fig:qualitative_topkids}
  \vspace{-0.2cm}
\end{figure*}

\begin{figure}[t]
  \centering
  \vspace{-0.1cm}
  \includegraphics[width=0.5\textwidth]{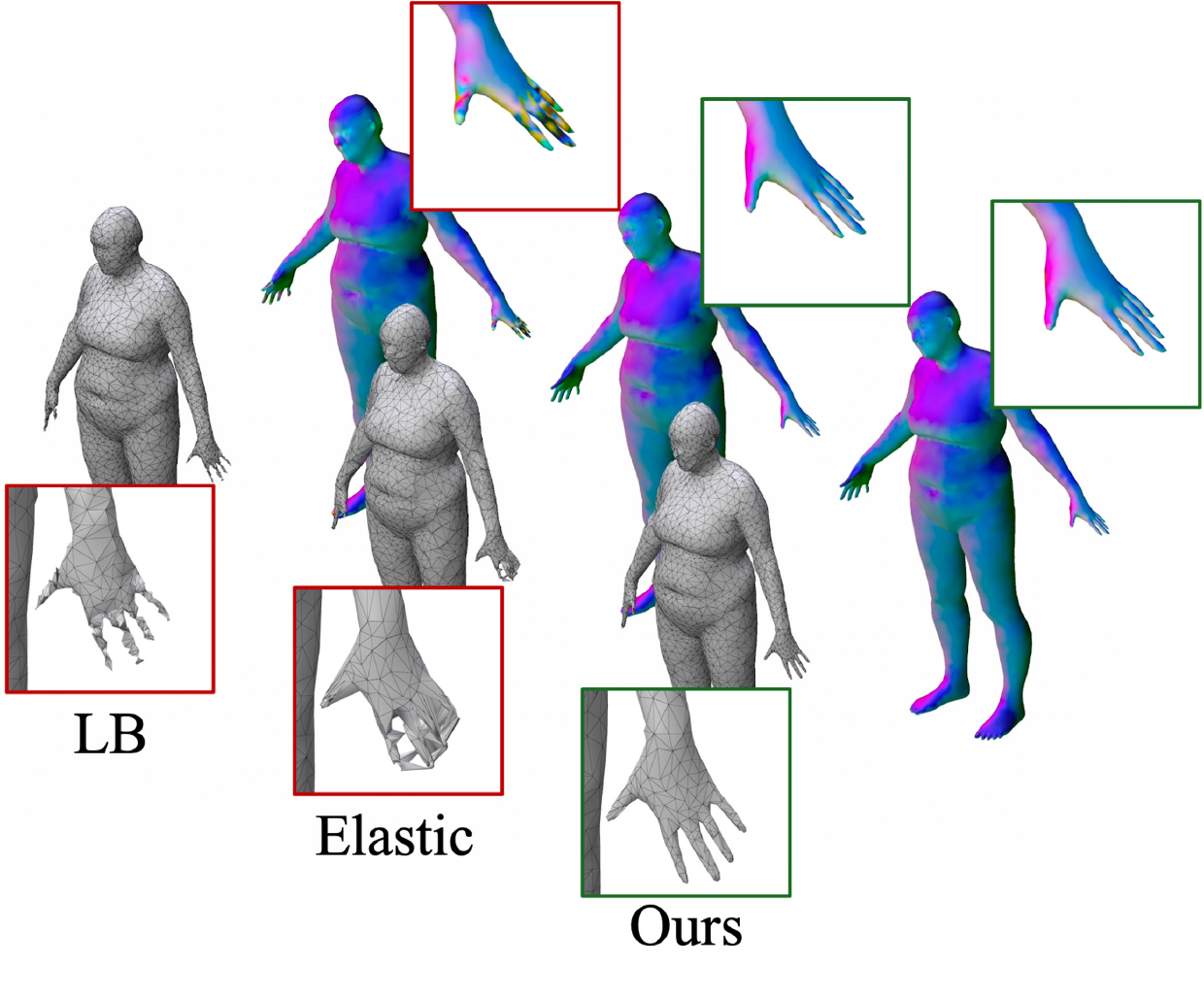}
  \vspace{-0.8cm}
  \caption{\textbf{Qualitative Results on FAUST}. Comparison of ULRSSM in the LBO, (orthogonalized) elastic, and proposed hybrid basis. Both normal (colored) and vertex transfer are visualized. The proposed hybrid basis yields accurate mappings, particularly on fine details such as the hands.
  }
  \vspace{-0.5cm}
  \label{fig:qualitative_result_vertex}
\end{figure}

\textbf{GeomFMaps} \cite{donati_deep_2020} originally proposed the addition of a Laplacian regularization term to the FMNet framework, which has proven effective at enforcing isometric characteristics of the map calculated from \cref{eq:linear_lsq}.
We replace the LBO basis functions with the hybrid formulation, solving them separately as proposed in \cref{sec:method}.
For the elastic part of the functional map, we replace both the $E_\text{data}$ and $E_\text{reg}$ terms in the map optimization problem with our weighted variations.
We also regularize the ground truth supervision loss $\mathcal{L}_{\text{gt}}$ = $(C - C_\text{gt})$ with the weighted HS-norm.
The hybrid functional map is refined during inference to obtain dense point-to-point correspondences by performing a nearest-neighbor search in the hybrid vector space.
Following the recommendations of the original authors \cite{donati_deep_2020}, all final results in \cref{tab:main_results} are run at a spectral resolution of $k_{\text{LB}} = 20$, $k_{\text{Elas}} = 10$. %

\noindent\textbf{Results.}
We compare our results with those of GeomFMaps under the LBO basis and the supervised method FMNet~\cite{litany_deep_2017}.
Notably, the proposed hybrid basis outperforms LBO GeomFMaps in most settings, spanning near-isometric and non-isometric shape matching, where a particular benefit can be seen for SHREC'19 and SMAL, with a $2.3$ and $0.8$ improvement in mean geodesic error, respectively.

\textbf{ULRSSM} \cite{cao_unsupervised_2023}
has recently achieved SoTA performance in various challenging shape-matching settings.
We evaluate our proposed hybrid basis when used in ULRSSM instead of the pure LBO functional basis.
ULRSSM uses the functional map computation term described in~\cref{eq:linear_lsq}. 
Hence, we proceed to split the optimization problem as described in~\cref{sec:hybrid_basis} and adapt the elastic part with the proposed weighted formulation.
The authors of ULRSSM additionally regularize the functional map $C$ to preserve bijectivity $\mathcal{L}_{\text{bij}}$, orthogonality $\mathcal{L}_{\text{orth}}$, and a loss coupling functional and point-to-point maps $\mathcal{L}_{\text{couple}}$ in a differentiable manner.
For the elastic optimization, these are all reformulated in the HS-norm.
We use the same overall spectral resolution $k=200$ as the original implementation\cite{cao_unsupervised_2023}, with $k_{\text{LB}} = 140$, $k_{\text{Elas}} = 60$.
This choice of basis ratio is discussed in the appendix.

\noindent\textbf{Results.}
Using the hybrid basis instead of the LBO basis in ULRSSM results in notable performance improvements, even in near-isometric matching settings such as FAUST and SCAPE ~\cref{tab:main_results}.
Improvements are most significant in the non-isometric settings, including SMAL and inter-class DT4D-H, where the hybrid basis outperforms LBO with a geodesic error of $0.6$.
The most notable performance increase can be observed for TOPKIDS, where the hybrid basis yields a $45\%$ improvement in geodesic error.
Percentage-correct-keypoints (PCK) plots underscore these results (see ~\cref{fig:pck_plots}).

\textbf{Smooth Shells} \cite{eisenberger_smooth_2020} remains one of the most strongly performing axiomatic methods for spectral shape matching.
The method generates initial hypotheses for aligning a shape pair through a Markov-Chain Monte-Carlo (MCMC) step in a low-dimensional spectral basis ($k = 20$).
The algorithm then proceeds with an alternating optimization using both extrinsic and intrinsic information.
Following the principle that Laplacian eigenfunctions capture coarse shape features well, we perform the MCMC initialization in the LBO basis. 
During the hierarchical matching step, we extend the product manifold with an additional dimension consisting of the elastic basis.
We use a spectral resolution of $k_{\text{LB}} = 300$, $k_{\text{Elas}} = 200$, while the original implementation uses $k = 500$.

\noindent\textbf{Results.}
We observe that the performance with the proposed hybrid basis also leads to improved performance of Smooth Shells' over pure LBO, particularly for non-isometric and topologically noisy settings.
Notable improvements can be seen for the TOPKIDS and SMAL datasets, with a $3.3$ and $1.6$ improvement in mean geodesic error, respectively.

\begin{figure*}[t!]
    \centering
    \vspace{-0.2cm}
    \includegraphics[height=4.0cm]{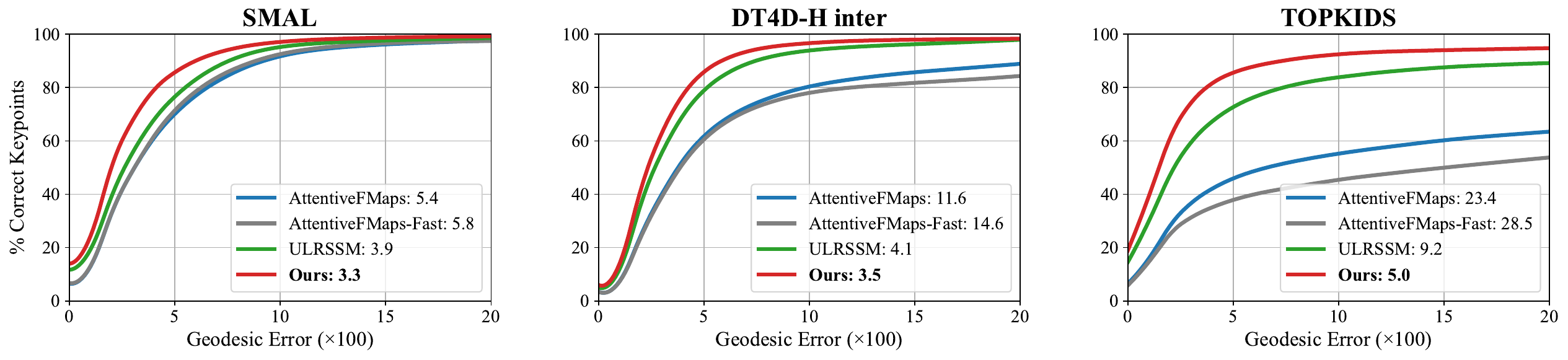}
    \vspace{-0.7cm}

    \caption{\textbf{Percentage-Correct-Keypoint Plots} depicting the geodesic error for state-of-the-art unsupervised methods on the datasets SMAL, DT4D-H inter, and TOPKIDS. We compare AttentiveFMaps, ULRSSM, and Hybrid ULRSSM(Ours).}
    \label{fig:pck_plots}
    \vspace{-0.2cm}
\end{figure*}

\subsection{Ablations and Implementation}
\label{sec:ablation}
We conduct two ablations to support the design choices regarding the generalization to the Hilbert-Schmidt (HS) norm in \cref{sec:method}, and the hybrid formulation.
All experiments are conducted on ULRSSM with $k = 200$ basis functions. 

To motivate the generalization to the HS norm, we consider two alternatives. 
The first involves using the standard functional map solver without making any adjustments to the non-orthogonal elastic basis.
The second alternative, we orthogonalize the basis using Gram-Schmidt under the inner product induced by $M$, making it directly usable in a standard FMap framework. 
Next, we compare the proposed hybrid formulation against using either pure LB or pure elastic basis functions.
Due to the complexity of the $k^2 \times k^2$ expansion under the HS-norm in~\cref{sec:method}, ULRSSM with 200 elastic basis becomes computationally intractable.
We therefore orthogonalize the elastic basis functions as an approximation to the proper adaptation of a fully elastic functional map.

\begin{table}[b]
\vspace{-0.2cm}
\centering
\begin{tabular}{ccccc}
\toprule
LB & Elastic &  Adaptation & Geo. error ($\times$100)\\
\midrule
\xmark & \cmark & \textcolor{darkgreen}{\ding{70}} & intractable \\
\xmark & \cmark & - & $40.2 \pm 0.80$ \\
\xmark & \cmark & \textcolor{gray}{\large\textbf{+}} & $5.75 \pm 1.20$ \\
\rowcolor{orange!10} \cmark & \xmark & - & $5.15 \pm 0.99$ \\
\cmark & \cmark & - & $4.37 \pm 1.57$ \\
\cmark & \cmark & \textcolor{gray}{\large\textbf{+}} & $4.33 \pm 0.56$ \\
\rowcolor{green!10} \cmark & \cmark & \textcolor{darkgreen}{\ding{70}} & \textbf{3.83 $\pm$ 0.74} \\
\bottomrule
\end{tabular}
\vspace{-0.2cm}
\caption{
Ablation study of the proposed hybrid basis and the effect of two adaptations to the non-orthogonal elastic basis: generalization to the HS norm ({\textcolor{darkgreen}{\ding{70}}}) as proposed in \cref{sec:method} and orthogonalization ({\textcolor{gray}{\large\textbf{+}}}). 
Experiments are conducted with ULRSSM ~\cite{cao_unsupervised_2023} on the SMAL dataset at spectral resolution $k=200$. The green row represents \colorbox{green!10}{our approach}, and the orange row the \colorbox{orange!10}{original ULRSSM} ~\cite{cao_unsupervised_2023}. Experiments are conducted 5 times; mean $\pm$ stdev. is reported.}
\label{table:ablation}
\end{table}

\noindent\textbf{Results.}
The results can be seen in~\cref{table:ablation} and~\cref{fig:motivation_pck}. 
We observe that using the standard Frobenius norm or orthogonalizing the elastic basis yields inferior results compared to the HS-norm adaptation. 
Furthermore, both pure LB and pure elastic basis adaptations perform worse than the proposed hybrid framework (c.f.~\cref{fig:motivation_pck}).
Interestingly, we observe that while the elastic basis functions achieve a superior detail alignment, and the LBO a better coarse alignment, the hybrid basis surpasses the performance in both regimes.
We conclude that both basis hybridization and the HS norm adaptation contribute to notable performance improvements.

\noindent\textbf{Implementation Details.}
Experiments are carried out in Pytorch 2.1.0 with CUDA version 12.1, except for Smooth Shells, which is run in Matlab based on the implementation provided by the authors.
Supervised and unsupervised methods are trained and evaluated on an NVIDIA A40.
A complete list of hyperparameters for each of the methods used is provided in the appendix.

\section{Limitations and Conclusion}
\label{sec:conclusion}

This work explores the efficacy of combining basis functions originating from different operators for deformable shape correspondence. 
Our findings highlight the importance of accurately treating non-orthogonal basis functions to reflect the anisotropic metric on each shape. 
Imposing orthogonality on the basis functions shows improvement over naive adaptation but does not supplant proper mathematical adaptation of the optimization objectives. 
Additionally, the elastic basis functions underperform when used independently in a learned context; integrating it with low-frequency LBO basis functions significantly enhances spectral matching accuracy.

Solving the expanded $k^2 \times k^2$ system from~\cref{sec:method} leads to computational overhead; however, this is tractable for the elastic basis size of $60$. 
Performance gains in the expanded form justify this trade-off. 
Future research could potentially address partial shapes or noisy point clouds with non-orthogonal basis functions as these are active areas of interest~\cite{attaiki_dpfm_2021, cao_self-supervised_2023, huang_multiway_2022}.

Overall, the proposed hybrid functional mapping approach, leveraging both elastic and LBO eigenfunctions, exhibits notable performance in diverse settings, including isometric and non-isometric deformations and under topological noise. 
Our findings open new avenues for integrating various non-orthogonal basis functions into deep functional mapping frameworks, paving the way for further advances in spectral shape matching for challenging settings.

\paragraph*{Acknowledgements.} 
We thank the reviewers, Lei Li, and Dongliang Cao for their support and valuable feedback.
Zorah Lähner is funded by a KI-Starter grant from the Ministry of Culture and Science of the State of North Rhine-Westphalia.

{\small
\bibliographystyle{ieeenat_fullname}
\bibliography{11_references}
}

\clearpage
\appendix
\maketitlesupplementary

In the supplementary materials we first provide additional mathematical background in~\cref{sec:suppl_background}.~\cref{sec:suppl_HS} contains complete derivations for the generalization to the HS norm (Lemma~\ref{lemma:edata_induced_norm},~\cref{thm:ereg_hs_norm}). Detailed analysis for the hybrid formulation is included in~\cref{sec:suppl_hybrid_analysis}.  
We then detail datasets and splits used for evaluation, followed by further experimental details in~\cref{sec:implementation_details}.
Additional experiment results is provided at \cref{sec:suppl_additional_experiments}. 
Ablation studies concerning our design choices are provided in~\cref{sec:suppl_ablation}.
Finally, we present runtime analysis in~\cref{sec:runtime}, additional visualization of the basis embedding in~\cref{sec:suppl_basis_embedding} and additional qualitative results in~\cref{sec:suppl_qualitative}.

\section{Mathematical Background}
\label{sec:suppl_background}
\setlength{\tabcolsep}{5pt}  %
\begin{table}[b!]
\vspace{-1em}
\small\centering
    \label{table:notation}%
    \begin{tabularx}{\columnwidth}{lp{5.6cm}}
        \toprule
        \textbf{Symbol} &\textbf{Description} \\
        \toprule
        $\mathcal{S}_1, \mathcal{S}_2$ &3D shapes (triangle mesh) with $n_1,_2$ verts \\
        $M_i$ & mass matrix on shape $i$\\
        $D_i$ & vertex-wise descriptors for shape $i$ \\
        $\Delta_i$ &Laplacian operator applied to shape $\mathcal{S}_i$ \\
        $\mathcal{W}_S[\cdot]$ &Elastic energy associated with $\mathcal{S}_i$ \\
        $\Phi_i$ & eigenbasis of Laplacian matrix $\Delta_{i}$ \\
        $\Psi_i$ & eigenbasis of Elastic Hessian $\text{Hess}\mathcal{W}_S[I]$ \\
        $C_{ij}$ &functional map between shapes $\mathcal{S}_i$ and $\mathcal{S}_j$\\
        $P{_{ij}}$ & point-wise map between shapes $\mathcal{S}_i$ and $\mathcal{S}_j$\\
        $k$ & the total spectral resolution\\
        $||\cdot||_{\{2,F,HS\}}$ & the L2, Frobenius, and HS norms\\
        \bottomrule
    \end{tabularx}
    \caption{Summary of notations used in this work.}

\end{table}

\subsection{Elastic Energy}

We defer to Hartwig et al. ~\cite{hartwig_elastic_2023} for a complete definition of the previously described elastic energy~\cite{aubry_wave_2011,heeren_exploring_2014,hartwig_elastic_2023}.
For all our experiments, we use the same elastic energy hyperparameters as Hartwig et al.~\cite{hartwig_elastic_2023}, including a bending weight of $10^{-2}$.
We can then solve the generalized eigenvalue problem for the Hessian of the elastic energy at the identity to obtain the basis functions $\Psi$.
\begin{equation*}
\text{Hess} \; \mathcal{W}_S[\text{Id}] v_\lambda = \lambda M v_\lambda
\end{equation*}

\subsection{Problem Setting}
In the non-rigid correspondence literature, descriptors $D_i$ are commonly characterized as functions over the shapes $S_i$.
In the discretized setting, many operations reduce to matrix-vector products.
However, to derive the proper operations weighted by the non-uniform weight matrices $M$ in the regularization of the functional map, we utilize the more general Hilbert space setting.

We assume that all functions on the spaces $\mathcal{F}(S_i)$ are $L^2$ integrable: 
$$L^2(\Omega) := \left\{ f:\Omega \rightarrow \mathbb{R} \mid \int_{\Omega} |f(x)|^2 \, dx < \infty \right\}$$

\noindent Then the inner product on each space $\mathcal{F}(S_i)$ is given by $\langle \cdot, \cdot \rangle_M$ in the space induced by $M$:

\begin{equation*}
\label{eq:induced_inner_product}
\langle x, y \rangle_{M} = \int_{\Omega} x(t) y(t) \, dM(t) \stackrel{?}{=}
 x^\top M y
\end{equation*}

where in the last step we emphasize that the discretized operations reduce to matrix-vector multiplications in the finite-dimensional setting.

We use the definition of the Hilbert-Schmidt norm for a general operator $A$ between (unweighted) Hilbert spaces~\cite[Sec 3.4]{hartwig_elastic_2023}:

\begin{equation*}
\label{eq:hs_norm}
\Vert A \Vert_{HS} := \sqrt{\text{tr}(A^* A)}
\end{equation*}

When $A: \mathcal{F}(S_1) \to \mathcal{F}(S_2)$ (the spaces under the anisotropic metric), we have the following equivalence with the Frobenius norm~\cite{hartwig_elastic_2023}:

\begin{align}
\label{eq:hs_frob_equiv}
    \|A\|^2_{HS} :&= \text{tr}(M_{k,1}^{-1} A^\top M_{k,2} A ) \\ \nonumber
&= \text{tr}(\sqrt{M_{k,1}^{-1}} A^\top \sqrt{M_{k,2}}\sqrt{M_{k,2}} A \sqrt{M_{k,1}^{-1}}) \\ \nonumber
&= \left\| \sqrt{M_{k,2}} A \sqrt{M_{k,1}^{-1}} \right\|_F^2 \nonumber
\end{align}

\section{Generalization to the Hilbert-Schmidt Norm}
\label{sec:suppl_HS}
\subsection{Derivation of~\cref{eq:linear_lsq} for General Hilbert Spaces.}
\label{sec:suppl_derivations}

\begin{proof}[Proof of~\cref{lemma:edata_induced_norm}]

The data term can be interpreted as the difference of the descriptor functions $D_1, D_2 \in \mathcal{F}(S_2)$ after $D_1$ was transferred to $\mathcal{F}(S_2)$ via the functional map $C$. 
We denote the first k eigenfunctions by $\Psi_{k,i}$ and the coefficients of $D_i$ projected into the basis spanned by these eigenfunctions by $D_{\Psi_i}:= \Psi_{k,i}^\dagger D_i$. 
We then have the following:

\begin{align*}
    &\Vert CD_{\Psi_1} - D_{\Psi_2} \Vert_{M_{k,2}} \\
    =\ &\sqrt{\langle CD_{\Psi_1} - D_{\Psi_2}, CD_{\Psi_1} - D_{\Psi_2} \rangle_{M_{k,2}}} \\
    = &\sqrt{\text{tr}(( CD_{\Psi_1} - D_{\Psi_2} )^\top M_{k,2} (CD_{\Psi_1} - D_{\Psi_2} ))}\\
\end{align*}

\noindent where we use the definition of the inner product~\cref{eq:induced_inner_product}, the cyclicity of the trace.
The identity then follows by splitting $M_{k,2} = \sqrt{M_{k,2}} \sqrt{M_{k,2}}$ and applying the definition of the Frobenius norm again, and using that $M_{k,2}$ is symmetric.

\end{proof}

As previously established~\cite{donati_deep_2020}, the energy in~\cref{eq:linear_lsq} can be solved for $C$ in closed form by solving $k$ different $k \times k$ linear systems (for each row of $C$).
In our case, the mass matrices $M$ prohibit this, requiring an expansion to a $k^2 \times k^2$ system.
This expansion is detailed below.

\begin{proof}[Proof of ~\cref{thm:ereg_hs_norm}]

Let \( S_1 \) and \( S_2 \) be Hilbert spaces defined on two shapes associated with mass matrices \( M_{k,1} \) and \( M_{k,2} \), respectively, which induce the inner product on each space. 
Let \( \Lambda_1 \) and \( \Lambda_2 \) be the diagonal matrices of eigenvalues of the respective linear operator on \( S_1 \) and \( S_2 \), and let \( C \): $\mathcal{F}(S_1) \to \mathcal{F}(S_2)$ be a linear map between the function spaces. 
The weighted linear operator commutativity regularization term can be expressed using the Hilbert-Schmidt norm as follows:

\begin{flalign*}
||(C \Lambda_1 &- \Lambda_2 C)||_{HS}^2 \\
& = \text{tr}(M_{k,1}^{-1}(C \Lambda_1 - \Lambda_2 C)^\top M_{k,2} (C \Lambda_1 - \Lambda_2 C)) \\
&= \left\| \sqrt{M_{k,2}} \left( C \Lambda_1 - \Lambda_2 C \right) \sqrt{M_{k,1}^{-1}} \right\|_{\text{F}}^2 \\
&= \left\| \sqrt{M_{k,2}} C \Lambda_1 \sqrt{M_{k,1}^{-1}} - \sqrt{M_{k,2}} \Lambda_2 C \sqrt{M_{k,1}^{-1}} \right\|_{\text{F}}^2 \\
\end{flalign*}

\noindent where we apply the definition of the HS-norm~\cref{eq:hs_frob_equiv}, the definition of the Frobenius norm, and multiply out the terms.

Now, we can use the definition of the Kronecker product for matrices $E$, $F$, $G$:
\begin{equation*}
\label{eq:kron_vec}
    \text{vec}(EFG) = \left( G^\top \otimes E \right) \text{vec}(F)
\end{equation*}

\noindent to expand and rearrange this into the form $||\zeta x||_F$ for a matrix $\zeta$ and vector $x := \text{vec}(C)$:

\begin{flalign*}
\lVert \sqrt{M_{k,2}} &\left( C \Lambda_1 - \Lambda_2 C \right) \sqrt{M_{k,1}^{-1}} \rVert_{\text{F}}^2 \\
= \ &\lVert ((\Lambda_1 \sqrt{M_{k,1}^{-1}}) \otimes \sqrt{M_{k,2}} \ - \\
&\sqrt{M_{k,1}^{-1}} \otimes (\sqrt{M_{k,2}} \Lambda_2)) \text{vec}(C) \rVert_F^2
\end{flalign*}

\end{proof}

\subsection{Solving the Combined Optimization Problem}

To solve $E(C)$ for a vectorized functional map $C$, $E_\text{data}$ must be expanded similarly. Using \cref{eq:kron_vec} we have:

\begin{flalign*}
& {\Vert \sqrt{M_{k,2}} (CD_{\Psi_1} - D_{\Psi_2}) \Vert}_F \\
& = {\Vert vec(\sqrt{M_{k,2}} CD_{\Psi_1}) - vec(\sqrt{M_{k,2}} D_{\Psi_2}))\Vert}_2 \\
& = {\Vert ((\sqrt{M_{k,2}} D_{\Psi_1} )^\top \otimes I) vec(C) - vec(\sqrt{M_{k,2}}D_{\Psi_2}) \rVert}_2
\end{flalign*}

\noindent Where we use the fact that the Frobenius norm of a matrix is just the $L_2$ norm of its stacked column vectors and the definition of the Kroeneker product. 
Combining the expanded forms of $E_\text{data}$ and $E_\text{reg}$ and observing the first variation of $E(C)$ yields a $k^2 \times k^2$ linear system which can be solved for $C$:

\begin{flalign*}
& (A^\top A + \lambda \zeta^\top \zeta) vec(C) - A^\top vec(B) = 0
\end{flalign*}
Here, we made the following substitutions for readability:
\begin{flalign*}
& A = ( \sqrt{M_{k,2}} D_{\Psi_1})^\top \otimes I \\
& B = \sqrt{M_{k,2}}D_{\Psi_2} \\
& \zeta = (\Lambda_1 \sqrt{M_{k,1}^{-1}}) \otimes \sqrt{M_{k,2}} - \sqrt{M_{k,1}^{-1}} \otimes (\sqrt{M_{k,2}} \Lambda_2)
\end{flalign*}

\section{Hybrid Formulation}
\label{sec:suppl_hybrid_analysis}
\subsection{Basis Non-orthogonality}
\begin{figure}[h!]
  \centering
  \vspace{-0.5cm}
  \includegraphics[width=0.5\textwidth]{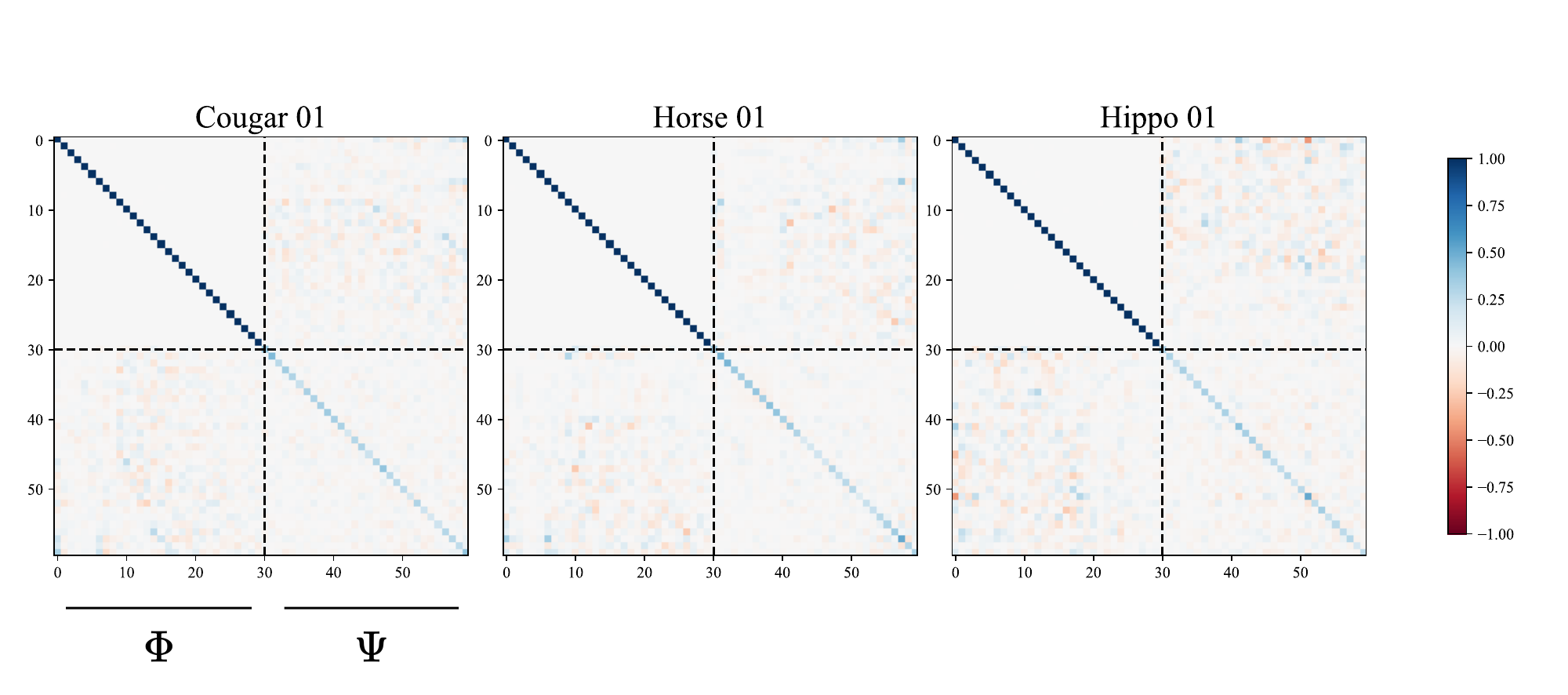}
  \caption{Matrix of inner product between hybrid basis $M_{k_{\text{total}}} = [\Phi \Psi]^\top M [\Phi \Psi]$, with $k_{\text{total}}=30+30$. We show the heatmap of the resulting matrix from three animal shapes of the SMAL dataset. }
  \label{fig:inner_product_matrix}
\end{figure}
To better understand the non-orthogonality between the bases, we study the inner product matrix of the hybrid basis induced by the mass $M$ on the shape in \cref{fig:inner_product_matrix}. The matrix exihibits block form and is defined as the mass matrix in the reduced hybrid basis:
\begin{equation}
\label{eq:hybrid_mass_matrix}
M_{k_{\text{total}}}=\left[\begin{array}{cc}
I &  M^{12}\\
M^{21} & M_{k_{\text{Elas}}}
\end{array}\right]
\end{equation}\\
The top-left block clearly depicts an identity matrix as we know the LBO eigenfunctions are orthogonal. 
The bottom-right block corresponds to the spectral mass matrix $M_{k_{\text{Elas}}}$ for the elastic basis \cite{hartwig_elastic_2023}.
Deviation from the identity matrix is expected as the elastic basis is non-orthogonal. 
Inter-basis regions ($M^{12}$ and $M^{21}$) have non-zero entries, indicating e.g. the first 30 LB bases and the first 30 Elastic bases are mutually non-orthogonal. 
However, we observe that the cross region blocks are sparse. 
Approximately 90\%  of the cross region blocks have values below 0.1.

\subsection{Starting Assumptions}
In ~\cref{sec:hybrid_basis}, we make the assumption that the off-diagonal blocks of both the hybrid functional map and reduced hybrid mass matrix $M_{k_{\text{total}}}$ contain no off-diagonal blocks to separate the hybrid Fmap optimization. 
To see that such assumptions are plausible, we conduct an experiment on how well a functional map can represent an underlying ground truth correspondence by recovering the point-to-point map from a hybrid functional map via standard nearest neighbor search.
This experiment is carried out both without any assumptions (Naive) and with the assumptions of zero off-diagonal blocks (Ours). 
Results can be seen in \cref{fig:sup_zero_cross}. 
As can be observed, even though the optimal naive map exhibits "leakage",with mild assumptions and less parameters one can solve for a block-diagonal map representing the same level of accuracy, outperforming the baseline LB and Elastic Basis in overall detail alignment and coarse global alignments.

\begin{figure*}[t!]
  \centering
  \vspace{-0.9cm}
  \includegraphics[width=\textwidth]{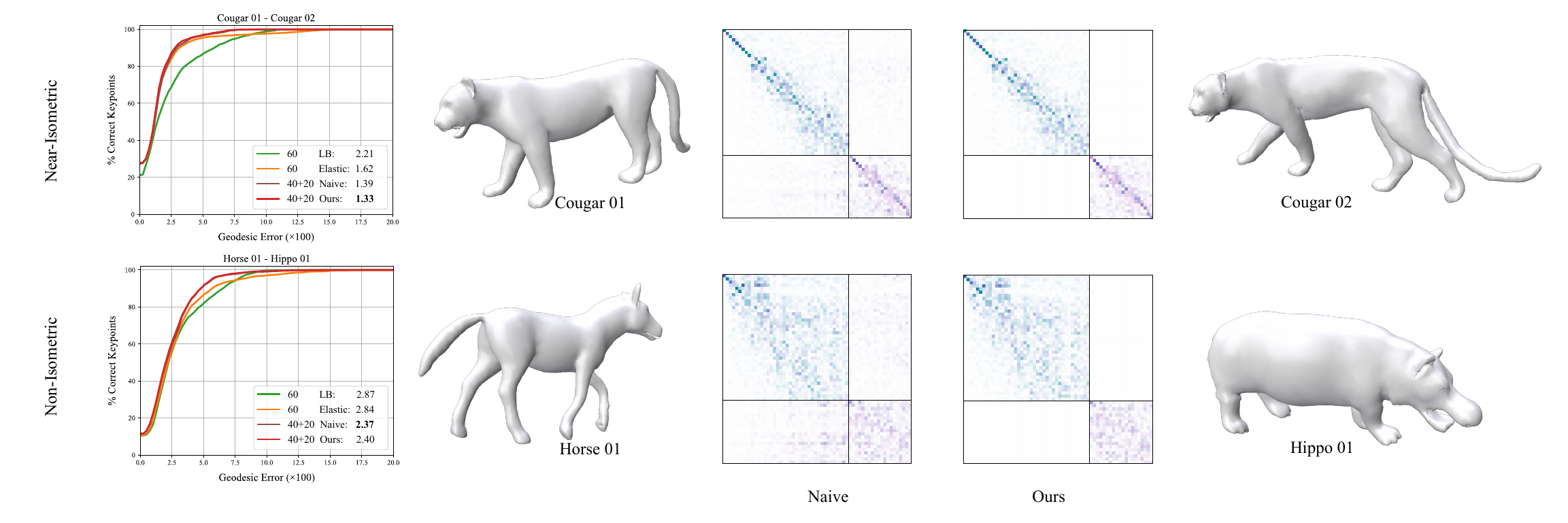}
  \vspace{-0.5cm}
  \caption{Point-to-point Map recovery accuracy from ground truth hybrid functional maps without any assumptions (Naive) and with zero cross region assumptions (Ours) between two representative pairs of shapes from the SMAL dataset, showcasing both near-isometric and non-isometric scenarios. We use $k=40+20$ as the spectral resolution.}
  \label{fig:sup_zero_cross}
\end{figure*}

\subsection{Optimization Block Matrix Formulation}
As demonstrate in \cref{fig:sup_zero_cross}, we observe that inter-basis matchings in a hybrid map do not improve performance.
In addition, such a map is harder to regularize (\cref{sec:block_suppl_ablation_exp}) and more expensive to compute.
We, therefore, impose the constraint $C^{21} = C^{12} = 0$ and show that this is equivalent to solving the optimization problems in~\cref{eq:linear_lsq} separately.

\begin{theorem}
\label{thm:seperable_block_matrix}
Let the off-diagonal blocks in the hybrid functional map ~\cref{eq:block_matrix_formulation} and the mass matrix in the reduced hybrid basis ~\cref{eq:hybrid_mass_matrix} be zero, i.e., there are no inter-basis matchings, and the bases are considered as if mutually orthogonal.
In hybrid function space, the energy in \cref{eq:linear_lsq} can then be equivalently formulated as two separate optimization problems:
\begin{align*}
C^{11}_* = \arg \min_C &E_{\text{LB}}(C) \quad \quad C^{22}_* = \arg \min_C E_{\text{Elas}}(C) \\
&C_* = \begin{pmatrix}
C^{11}_* & \mathbf{0} \\
\mathbf{0} & C^{22}_*
\end{pmatrix}
\end{align*}
\end{theorem}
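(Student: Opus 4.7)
The plan is to exploit the fact that once off-diagonal blocks are forced to zero, block matrix arithmetic decouples every operation in $E(C)$ cleanly. I would first set up notation explicitly for the hybrid quantities: stack descriptors as $D_i = \bigl[\,D_{\Phi_i}^\top \; D_{\Psi_i}^\top\,\bigr]^\top$, note that the eigenvalue matrix itself is block diagonal by construction, since its upper block holds LBO eigenvalues and its lower block holds elastic Hessian eigenvalues (there is no ``mixed'' operator), and recall that the standing assumption sets $M^{12}=M^{21}=0$ so $M_{k,i}$ and hence $\sqrt{M_{k,i}}$ and $\sqrt{M_{k,i}^{-1}}$ are all block diagonal with blocks $I$ and $\sqrt{M_{k_{\text{Elas}},i}^{(\pm1)}}$.

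Next I would push these block structures through $E_{\text{data}}$ and $E_{\text{reg}}$. Under $C^{12}=C^{21}=0$, the product $CD_i$ is a stack $\bigl[(C^{11}D_{\Phi_1})^\top \; (C^{22}D_{\Psi_1})^\top\bigr]^\top$, so $CD_1-D_2$ is block-stacked with entries $(C^{11}D_{\Phi_1}-D_{\Phi_2})$ and $(C^{22}D_{\Psi_1}-D_{\Psi_2})$. Applying Lemma~\ref{lemma:edata_induced_norm} with a block-diagonal $M_{k,2}$, the induced norm squared is a sum of norm-squared terms on each block, and the top block collapses to a plain Frobenius norm because the LBO reduced mass is the identity. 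The analogous calculation for $E_{\text{reg}}$ uses Proposition~\ref{thm:ereg_hs_norm}: the block diagonality of $\Lambda_1$, $\Lambda_2$, $C$, and both square-rooted mass matrices means $\sqrt{M_{k,2}}(C\Lambda_1-\Lambda_2 C)\sqrt{M_{k,1}^{-1}}$ is itself block diagonal, and the Frobenius norm squared of a block diagonal matrix is the sum of the Frobenius norms squared of its blocks.

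Putting the two decompositions together yields
\begin{equation*}
E(C) \;=\; \bigl[E_{\text{data,LB}}(C^{11}) + \lambda E_{\text{reg,LB}}(C^{11})\bigr] + \bigl[E_{\text{data,Elas}}(C^{22}) + \lambda E_{\text{reg,Elas}}(C^{22})\bigr],
\end{equation*}
where the two bracketed energies share no variables. Since a separable sum is minimized by minimizing each summand independently, the joint minimizer of $E(C)$ over block diagonal $C$ is exactly $\mathrm{diag}(C^{11}_*, C^{22}_*)$ with $C^{11}_* = \arg\min E_{\text{LB}}$ and $C^{22}_* = \arg\min E_{\text{Elas}}$, which is the claimed equivalence.

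The proof is more accounting than insight, so I do not anticipate a hard step per se; the one place to be careful is the regularizer. Naively one might worry that the Kronecker-expanded $k^2\times k^2$ system in Proposition~\ref{thm:ereg_hs_norm} couples all entries of $C$, but the key observation is that this expansion is merely a vectorization of the block-diagonal Frobenius expression $\|\sqrt{M_{k,2}}(C\Lambda_1-\Lambda_2 C)\sqrt{M_{k,1}^{-1}}\|_F^2$; once that inner argument is shown to be block diagonal, separability follows without ever touching the Kronecker form. I would therefore carry out the decomposition directly at the level of the HS norm rather than its vectorized linear system, keeping the argument short and transparent.
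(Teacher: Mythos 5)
Your proposal is correct and follows essentially the same route as the paper's proof: impose the zero off-diagonal blocks on $C$ and on the reduced hybrid mass matrix, push the block structure through $E_{\text{data}}$ and $E_{\text{reg}}$, and use the additivity of the squared weighted Frobenius/HS norms to split the energy into two independent subproblems that share no variables. The only cosmetic difference is that the paper explicitly checks that the hybrid descriptor projection decomposes by writing out the orthogonal projector $\Theta_i^\dagger = M_{k_{\text{total}},i}^{-1}\Theta_i^\top M_i$ in block form, whereas you take the stacked form $[D_{\Phi_i};\,D_{\Psi_i}]$ as given, which is immediate under the stated block-diagonal mass assumption.
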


\begin{proof}[Proof of ~\cref{thm:seperable_block_matrix}]
In the following we use $k_{\text{total}}$ as the total basis size, $k_{\text{LB}}$ as LB basis size, $k_{\text{Elas}}$ as the non-orthogonal elastic basis size.
The block matrix representation of the functional map \(C\) in the hybrid vector space is given by

\begin{equation*}
C = \begin{pmatrix}
C^{11} & \mathbf{0} \\
\mathbf{0} & C^{22}
\end{pmatrix},
\end{equation*}

where \(C^{11}\) and \(C^{22}\) represent the functional maps within the same basis types, and off diagonal blocks are fixed to zero as per our starting assumptions.

By our second assumption, the spectral mass matrix in the reduced hybrid basis assumes zero off diagonal blocks and is similarly given by\\

\begin{equation*}
M_{k_{\text{total}}}=\left[\begin{array}{cc}
I &  \mathbf{0}\\
\mathbf{0} & M_{k_{\text{Elas}}}
\end{array}\right]
\end{equation*}\\

We denote the hybrid basis as \( \Theta_i := [\Phi_i \, \Psi_i] \). 
The orthogonal projector operator \cite{hartwig_elastic_2023} of the hybrid basis, is given by \( \Theta_i^\dagger := M_{k_{\text{total},i}}^{-1} \Theta_i^\top M_i \).
Therefore the orthogonal projector can be written to block form:

\begin{align*}
\Theta_i^{\dagger} &=  \left[\begin{array}{c}
 \Phi_i^{\top} M_i \\
 M_{k_{\text{Elas},i}}^{-1} \Psi_i^{\top} M_i
 \end{array}\right] = \left[\begin{array}{l}
\Phi_i^{\dagger} \\
\Psi_i^{\dagger}
\end{array}\right]
\end{align*}

Then we have for the descriptor projecting to the hybrid basis as:
\begin{equation*}
    D_{\Theta_i} := \Theta_i^\dagger D_i = 
    \begin{bmatrix}
    \Phi_i^{\dagger} D_i \\
    \Psi_i^{\dagger} D_i
    \end{bmatrix}=
    \begin{bmatrix}
     D_{\Phi_i} \\
    D_{\Psi_i}
    \end{bmatrix}
\end{equation*}

We let $\Sigma_i := diag( \lambda_1, ... \lambda_{k_{LB}},  \gamma_1, ..., \gamma_k)$ the diagonal matrix of combined eigenvalues from $\Delta_i$ and $\text{Hess}\mathcal{W}_S[\text{Id}]$, respectively. Then, both the data and regularization terms in~\cref{eq:linear_lsq} can be expanded:

\begin{equation*}
E(C) = \| C D_{\Theta_1} - D_{\Theta_2} \|_{M_{k_{\text{total,2}}}}^2 + \lambda \|C \Sigma_1 - \Sigma_2 C \|_{HS}^2 \nonumber
\end{equation*}
\begin{figure*}[t!]
  \centering
  \vspace{-0.9cm}
  \includegraphics[width=0.9\textwidth]{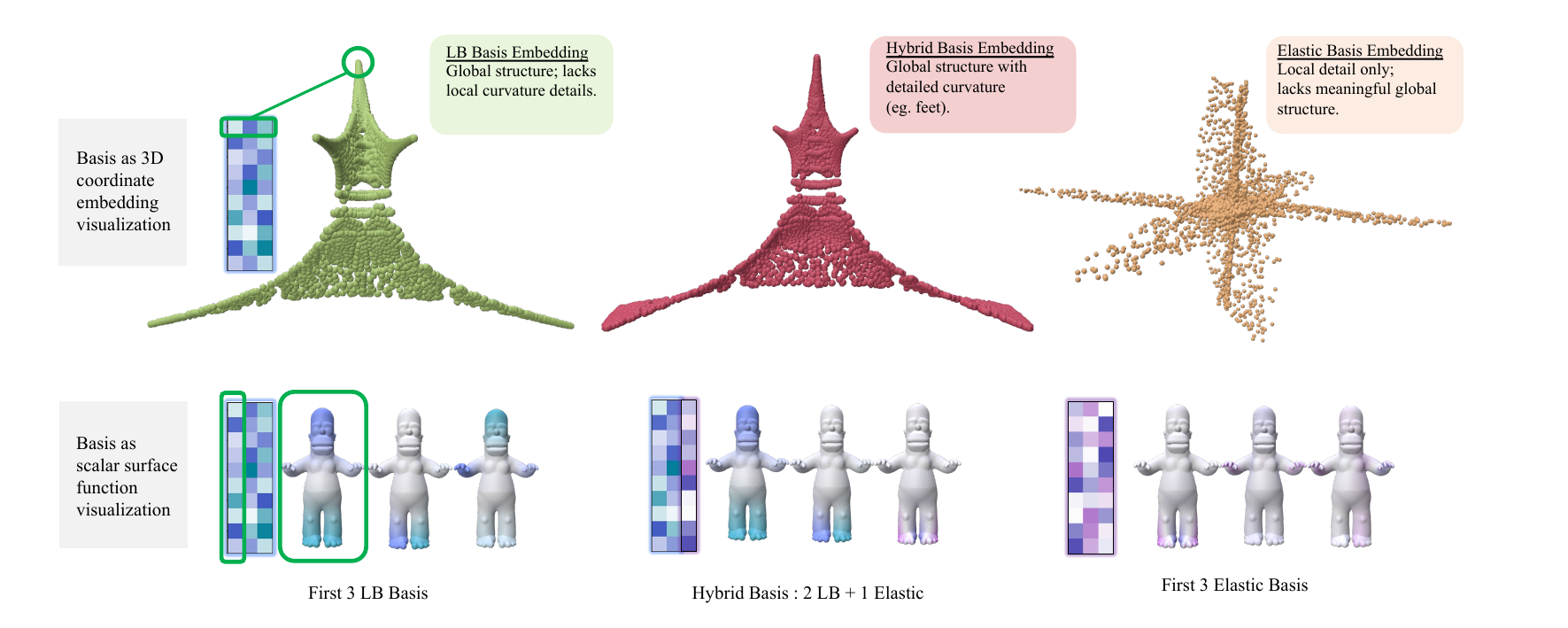}
  \vspace{-0.5cm}
  \caption{Characterization of different basis functions by visualizing them as scalar surface functions (bottom), and as 3d coordinate embeddings (top). While the elastic basis lacks global structure, it naturally extends the smooth approximation of the LBO basis to incorporate geometric details (creases) when hybridized.}
  \label{fig:sup_embedding}
\end{figure*}
\noindent We can express the data term in the block matrix format:
\begin{align*}
&E_{\text{data}}(C) \\
=& \left\| C D_{\Theta_1} - D_{\Theta_2} \right\|_{M_{k_{\text{total,2}}}}^2 \\
=& \left\| 
\begin{bmatrix}
C^{11}D_{\Phi_1} - D_{\Phi_2} \\
\sqrt{M_{k_{\text{Elas,2}}}}(C^{22}D_{\Psi_1} - D_{\Psi_2})
\end{bmatrix}
\right\|_F^2
\end{align*}

Due to the additivity of the Frobenius norm, the two terms can be minimized separately. A similar condition holds for the regularization term as:
\begin{align*}
&E_{\text{reg}}(C) \\
=& ||C\ \Sigma_1 - \Sigma_2 \ C||^2_{\text{HS}} \\
=& \left\| 
\begin{bmatrix}
C^{11}\Lambda_1 - \Lambda_2 C^{11} \\
\sqrt{M_{k_{\text{Elas,2}}}} (C^{22}\Gamma_1 - \Gamma_2 C^{22}) \sqrt{M_{k_{\text{Elas,1}}}^{-1}}
\end{bmatrix}
\right\|_F^2
\end{align*}

Now, the optimization problem decouples into two separate problems, one for each basis type. These can be solved independently to obtain the optimal functional maps $C^*_{11}$ and $C^*_{22}$ within the LBO and elastic bases, respectively.

\begin{align*}
C^{11}_* &= \arg\min_{C^{11}} E_{\text{LB}}(C^{11}) \\
C^{22}_* &= \arg\min_{C^{22}} E_{\text{Elas}}(C^{22})
\end{align*}

\end{proof}

Estimating a hybrid functional map in this manner is an effective regularization which aids computational efficiency, however the two bases are not separated everywhere. 
The notion of "hybrid" is conceptually important and essential for the rest of optimization stages, including point map conversion and the training of neural network feature extractors as seen in \cref{sec:suppl_ablation}.
\subsection{Point-to-Point Map Conversion}

In order to obtain a point-to-point map $P$ from a general functional map $C$ between non-orthogonal bases, the following minimization objective is considered~\cite{hartwig_elastic_2023}:
\begin{align*}
\min_{P \in \{0,1\}^{m \times n}}& \|C M_{k,1}^{-1} \Psi_{1}^{\top} - M_{k,2}^{-1} \Psi_{2}^{\top} P^{\top}\|_{M_{k,2}}^2 \\
&\text{s.t. } P^T \mathbf{1} = \mathbf{1}
\end{align*}
This formulation can be solved efficiently by considering the nearest neighbor in $\sqrt{M_{k,2}} C M_{k,1}^{-1} \Psi_{1}^{\top}$ for every point in $\sqrt{M_{k,2}^{-1}} \Psi_{2}^{\top}$. 
For a hybrid functional map, the above objective cannot be decoupled, and the search yields a single point-to-point map based on the embeddings in the hybrid space, as illustrated in~\cref{fig:sup_embedding}. 
Note that the block structure assumption can still be taken advantage of for efficient matrix multiplications prior to nearest neighbor search.

\section{Implementation Details}
\label{sec:implementation_details}

\begin{figure*}[t!]
  \centering
  \vspace{-0.9cm}
  \includegraphics[width=0.9\textwidth]{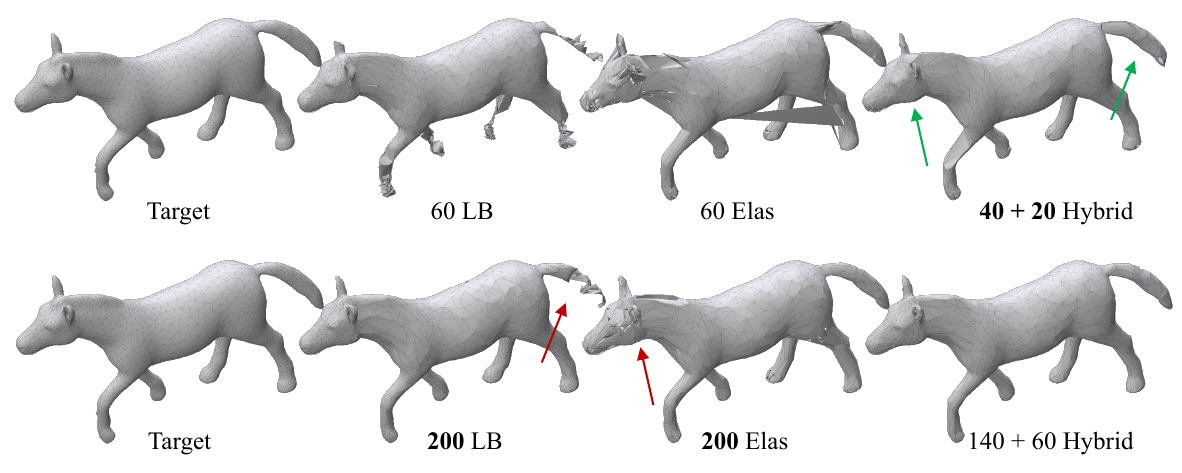}
  \vspace{-0.5cm}
  \caption{Comparison of correspondence visualizations by transferring vertex positions from the
source (horse) to the target shape (hippo). We examine the results from LB, Elastic,
and Hybrid basis functions by encoding and recovering \textbf{ground truth} point-to-point correspondences through functional map representations at
a spectral resolution of $k = 60$ and $k = 200$. Notably, the hybrid functional map representation can encode reasonable correspondences at a resolution of 60, while the other two methods struggle even at a resolution of 200.}
  \label{fig:sup_vertex_map}
\end{figure*}
\subsection{Datasets}
\label{sec:suppl_datasets}

We evaluate our method across near-isometric, non-isometric, and topologically noisy settings. 
Splits are chosen based on standard practices in the recent literature~\cite{donati_deep_2022,cao_unsupervised_2023}.

\paragraph*{Near-isometric:} The FAUST, SCAPE, and SHREC'19 datasets represent near-isometric deformations of humans, with 100, 71, and 44 subjects, respectively.
We follow the standard train/test splits for FAUST and SCAPE:
: 80/20 for FAUST and 51/20 for SCAPE.
Evaluation of our method on SHREC'19 is conducted with a model trained on a combination of FAUST and SCAPE inline with recent methods~\cite{li_learning_2022,donati_deep_2022,cao_unsupervised_2023}.
We use the more challenging re-meshed versions as in recent works.

\paragraph*{Non-isometric:}
The SMAL dataset features non-isometric deformations between 49 four-legged animal shapes from eight classes. 
The dataset is split 5/3 by animal category as in Donati et al.~\cite{donati_deep_2022}, resulting in a train/test split of 29/20 shapes.
We further evaluate the large animation dataset DeformingThings4D (DT4D-H)~\cite{li_4dcomplete_2021}, using the same inter- and intra-category splits as Donati et al.~\cite{donati_deep_2022}.
\paragraph*{Topological Noise:}
The TOPKIDS dataset~\cite{lahner_shrec16_2016} consists of shapes of children featuring significant topological variations and poses a significant challenge for unsupervised functional map-based works. 
Considering its limited size of 26 shapes, we restrict our comparisons to axiomatic and unsupervised methods and use shape 0 as a reference for matching with the other 25 shapes, following recent methods~\cite{eisenberger_smooth_2020, donati_deep_2022, cao_unsupervised_2023}.

\subsection{Experimental Details}
\label{sec:suppl_implementation}
In this section we provide additional details regarding the evaluation of our proposed hybrid basis from~\cref{sec:experimental}, including the axiomatic, supervised, and unsupervised settings.
Unless otherwise mentioned, implementations and parameters are left unaltered for the hybrid adaptation.

We first provide general details regarding learning and then the individual adaptations for each method.
Learned methods (GeomFMaps~\cite{donati_deep_2020} and ULRSSM~\cite{cao_unsupervised_2023}) are trained with PyTorch, using DiffusionNet as the feature extractor and WKS descriptors as input features, except for the SMAL dataset where we use XYZ signal with augmented random rotation as in recent methods~\cite{li_learning_2022,cao_unsupervised_2023}. 
The dimension of the output features is fixed at 256 for all experiments. 

For unsupervised learned methods, we propose the following linear annealing scheme for learning in a hybrid basis, as mentioned in~\cref{sec:learning_hybrid}.
\begin{align*}
    \mathcal{L}_{\text{total}} = \alpha\mathcal{L}_{\text{LB}} + &\mu \ \beta \ \mathcal{L}_{\text{Elas}} \\
    \alpha = \frac{1}{2} \cdot \frac{k^2}{(k_{\text{LB}})^2} \qquad &
    \beta = \frac{1}{2} \cdot \frac{k^2}{(k_{\text{Elas}})^2} 
\end{align*}

Where $k$ is the total spectral resolution.
The parameters $\alpha$ and $\beta$ ensure the losses are normalized w.r.t. the number of entries in the functional map similar to the approach of Li et al.~\cite{li_learning_2022}.
We increase $\mu$ over the first 2000 iterations so that the less-robust elastic basis functions do not adversely affect feature initialization.\\

In the following, \( C \) represents the block-functional map's elastic part for clarity.
Without loss of generality we let $C_{12}: \mathcal{F}(S_1) \to \mathcal{F}(S_2)$.
\paragraph*{Hybrid GeomFMaps.} We use 30 total eigenfunctions as in the original work.
For the hybrid adaptation, 20 LBO and 10 Elastic basis functions are used as the spectral resolution.
To compute the functional map, we use the standard regularized functional map solver and set \(\lambda = 1 \times 10^{-3}\) as in the original work~\cite{donati_deep_2020}.
For the hybrid adaptation, we empirically set \(\lambda = 5 \times 10^{-4}\) for the elastic solver.

\begin{figure*}[h]
  \centering
  \includegraphics[width=\textwidth]{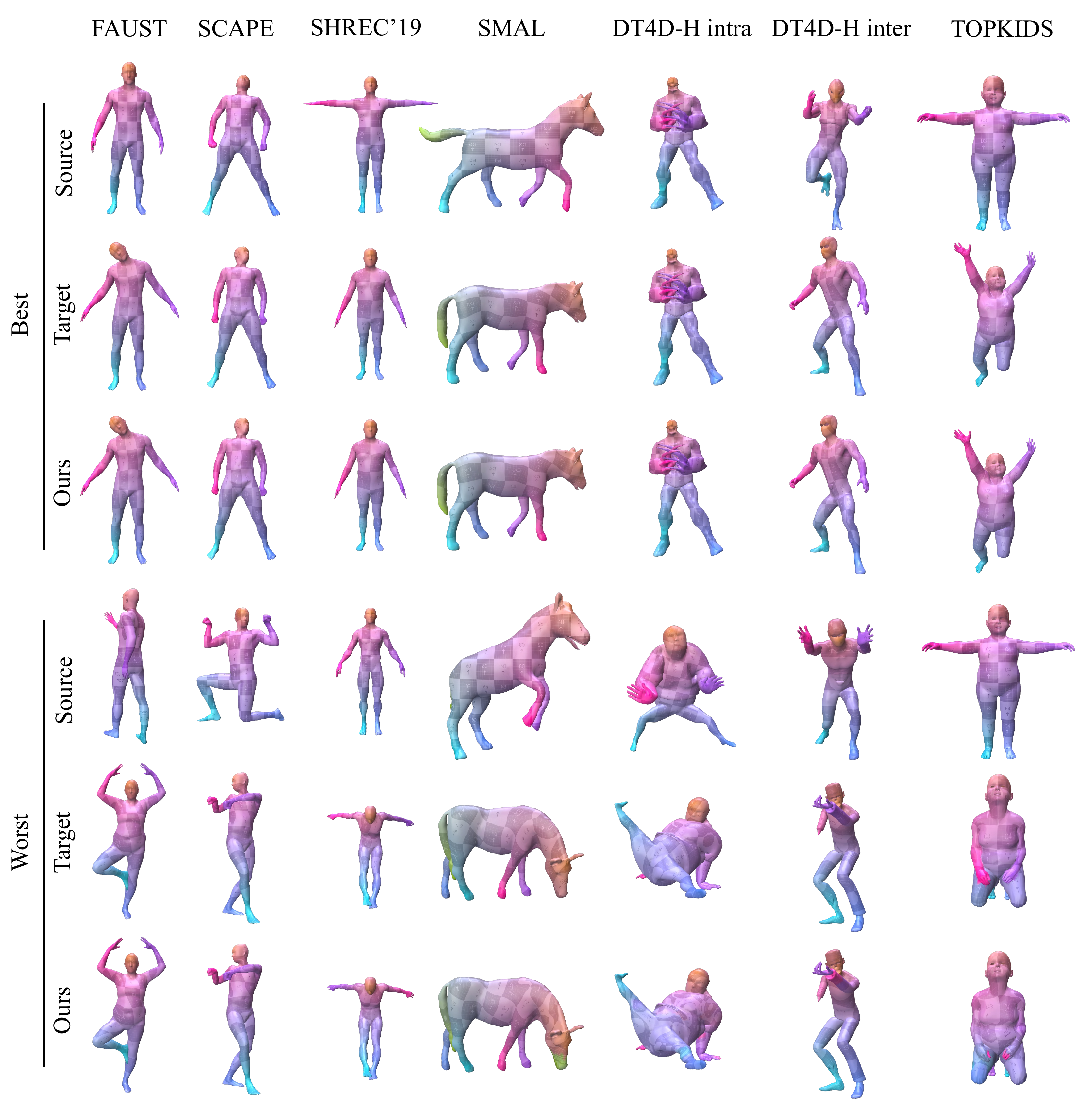}
  \caption{\textbf{Additional qualitative results} for the best and worst predictions of ULRSSM in the proposed hybrid basis (our method).}
  \label{fig:sup_quali_full}
\end{figure*}

GeomFMaps is supervised using a functional map constructed from the ground-truth correspondences. We thus adapt the elastic loss as follows (note here C refers to \( \mathcal{F}(S_1) \to \mathcal{F}(S_2) \) as the original work~\cite{donati_deep_2020}):

\begin{equation*}
\mathcal{L}_{\text{Elas}} = \left\| C - C_{\text{gt}} \right\|_{\text{HS}}^2 = \Vert \sqrt{M_{k,2}} (C - C_{\text{gt}}) \sqrt{M^{-1}_{k,1}} \Vert_F^2
\end{equation*}

\paragraph*{Hybrid ULRSSM.}
The ULRSSM baseline~\cite{cao_unsupervised_2023} uses a spectral resolution of $k=200$.
We keep the total spectral resolution fixed at $k=200$, using $140$ LBO and $60$ Elastic eigenfunctions.

For the functional map computation, we use the Resolvent regularized functional map solver~\cite{ren_structured_2019} for LB map block setting \(\lambda = 100\) as in the original work. 
Our adapted variant is weighted empirically with \(\lambda = 50\) for the elastic block. 
ULRSSM regularizes the functional map obtained from~\cref{eq:linear_lsq} with 3 losses: bijectivity, orthogonality, and a coupling loss with the point-to-point map.
The loss for the LBO functional map block $\mathcal{L}_{\text{LB}}$ is kept the same as the baseline method while we adapt the bijectivity, orthogonality, and coupling terms for the elastic block in the HS-norm.

While the bijectivity loss is left unchanged, we adapt the \( \mathcal{L}_{\text{orth}} \) term using the adjoint $C^*$ as follows, similar to Hartwig et al. in their adapted ZoomOut~\cite{hartwig_elastic_2023}:
\begin{align*}
\mathcal{L}_{\text{orth}} &= \left\|C_{12}^*C_{12} - I \right\|_{\text{HS}}^2 + \left\|C_{21}^*C_{21} - I \right\|_{\text{HS}}^2  \\
 &= \left\|C_{21}^*C_{21} - I \right\|_F^2 + \left\|C_{12}^*C_{12} - I \right\|_F^2
\end{align*}

We note that concerning the respective bijectivity and orthogonality losses, the operators $C_{12}C_{21} - I$ and $C_{21}^*C_{21} - I$ map to and from the same function space, thus the HS-norm is equivalent to the standard Frobenius norm and requires no non-uniform weighting.

The \( \mathcal{L}_{\text{couple}} \) term is given by:

\begin{align*}
\mathcal{L}_{\text{couple}} &= \left\|C_{12} - \Psi_{2}^{\dagger} \Pi_{21} \Psi_{1}\right\|_{\text{HS}}^2 + \left\|C_{21} - \Psi_{1}^{\dagger} \Pi_{12} \Psi_{2}\right\|_{\text{HS}}^2 \\
&= \left\|\sqrt{M_{k,2}}(C_{12} - \Psi_{2}^{\dagger} \Pi_{21} \Psi_{1})\sqrt{M^{-1}_{k,1}}\right\|_{F}^2 \\ 
&+ \left\|\sqrt{M_{k,1}}(C_{21} - \Psi_{1}^{\dagger} \Pi_{12} \Psi_{2})\sqrt{M^{-1}_{k,2}}\right\|_{F}^2
\end{align*}

For the definition of the point-to-point maps $\Pi_{21}$ and $\Pi_{12}$ we refer readers to the original method~\cite{cao_unsupervised_2023}.

Empirically, we set \( \lambda_{\text{bij}} = \lambda_{\text{orth}} = \lambda_{\text{couple}} = 1.0 \) for the LB part following the original work.
We keep these parameters the same for the elastic block except setting $\lambda_{\text{orth}} = 0.0$ as we observed the orthogonality constraint adversely affects the method's performance. 

\paragraph*{Hybrid SmoothShells.}
To demonstrate how the proposed hybrid basis can be used in an axiomatic method, we adapt the method SmoothShells~\cite{eisenberger_smooth_2020} with the minimally needed changes.

The initialization of SmoothShells consists of a low-frequency MCMC alignment. 
We keep this step as-is and do not replace the LBO smoothing with the hybrid eigenfunctions because the elastic eigenfunctions cannot achieve low-frequency smoothing by design. 
We fix the random seed and re-run the baseline Smooth Shells and the hybrid version with the same MCMC initialization to rule out noise.

The main idea of Smooth Shells~\cite{eisenberger_smooth_2020} is to achieve a coarse-to-fine alignment by iteratively adding higher-frequency LBO eigenfunctions to the intrinsic-extrinsic embedding. 
Instead of only adding LBO eigenfunctions in a new iteration, we add a ratio of LBO and elastic eigenfunctions.  %
As in the other adaptations, we keep the total number of basis functions $k=500$ fixed.
We then empirically replace the highest $200$ LBO eigenfunctions with elastic eigenfunctions, modifying the product embedding to be:

\begin{align*}
    \mathbf{X}_k &:= \left( \Phi_{1,k}, \Psi_{1,k}, X_k, \mathbf{n}_k^{S_1} \right) \in \mathbb{R}^{n_1 \times (k+6)} \\
    \mathbf{Y}_k &:= \left( \Phi_{2,k}, \Psi_{2,k}, Y_k, \mathbf{n}_k^{S_2} \right) \in \mathbb{R}^{n_2 \times (k+6)} \\
\end{align*}

\noindent where we use our notation of the LBO and elastic basis functions. $\mathbf{X}_k,\mathbf{Y}_k$ are the product embeddings for shape $S_1$ and $S_2$ with $X_k,Y_k$ the respective smoothed cartesian coordinates, and $\mathbf{n}_k$ the outer normals on each shape.
The rest of the optimization follows directly from \cite{eisenberger_smooth_2020}.

\section{Additional Experiments}
\label{sec:suppl_additional_experiments}
\subsection{Near-isometric Cross-dataset Generalisation}
We provide additional experimental results in near-isometric cross-dataset scenarios on FAUST, SCAPE, and SHREC'19 datasets using ULRSSM ~\cite{cao_unsupervised_2023} and our hybrid adaptation in ~\cref{tab:near-iso}.
Using the hybrid basis also improves performance under these settings.

\begin{table*}[h!t!]

\small
\setlength{\tabcolsep}{5pt}
    \centering
    \small
    
\caption{\textbf{Additional Experimental Results of Hybrid ULRSSM compared to ULRSSM ~\cite{cao_unsupervised_2023} in cross-dataset near-isometric scenario on FAUST, SCAPE, and SHREC'19 datasets.} Our hybrid adaptation generalizes across datasets and also shows improved performance under such settings. ULRSSM results are reproduced from the official repository. }

        \begin{tabular}{@{}lccccccccc@{}}
        \toprule
        \multicolumn{1}{l}{Train}  & \multicolumn{3}{c}{\textbf{FAUST}}   & \multicolumn{3}{c}{\textbf{SCAPE}}  & \multicolumn{3}{c}{\textbf{FAUST + SCAPE}} \\ \cmidrule(lr){2-4} \cmidrule(lr){5-7} \cmidrule(lr){8-10}
        \multicolumn{1}{l}{Test} & \multicolumn{1}{c}{\textbf{FAUST}} & \multicolumn{1}{c}{\textbf{SCAPE}} & \multicolumn{1}{c}{\textbf{SHREC'19}} & \multicolumn{1}{c}{\textbf{FAUST}} & \multicolumn{1}{c}{\textbf{SCAPE}} & \multicolumn{1}{c}{\textbf{SHREC'19}} & \multicolumn{1}{c}{\textbf{FAUST}} & \multicolumn{1}{c}{\textbf{SCAPE}} & \multicolumn{1}{c}{\textbf{SHREC'19}}
        \\ \midrule
        
        \multicolumn{1}{l}{ULRSSM ~\cite{cao_unsupervised_2023}} & \multicolumn{1}{c}{1.6}  & \multicolumn{1}{c}{2.2} & \multicolumn{1}{c}{6.7}  & \multicolumn{1}{c}{1.6} & \multicolumn{1}{c}{1.9} & \multicolumn{1}{c}{5.7} & \multicolumn{1}{c}{1.6}    & \multicolumn{1}{c}{2.1} & \multicolumn{1}{c}{4.6} \\
        \multicolumn{1}{l}{Hybrid ULRSSM (Ours)} & \multicolumn{1}{c}{\textbf{1.5}}  & \multicolumn{1}{c}{\textbf{2.1}} & \multicolumn{1}{c}{\textbf{5.5}}  & \multicolumn{1}{c}{\textbf{1.5}} & \multicolumn{1}{c}{\textbf{1.8}} & \multicolumn{1}{c}{\textbf{5.4}} & \multicolumn{1}{c}{\textbf{1.5}}    & \multicolumn{1}{c}{\textbf{2.0}} & \multicolumn{1}{c}{\textbf{3.6}} \\
        \bottomrule
        \end{tabular} 
\label{tab:near-iso}
\end{table*}

\begin{table}[b!]
\centering
\footnotesize
\renewcommand{\arraystretch}{1.2}
\begin{tabular}{lcc} %
\Xhline{0.75pt}
\;\textbf{Geo.err. (×100)}\;\;\;\; & \textbf{FAUST} & \textbf{SMAL} \\ \hline
\;\;\;200\;\;\;\;\;\;\; LB & 2.1 & 7.6 \\ \hline
\;\;\;200\;\;\;\;\;\;\; Elastic & 2.0 & 5.9 \\ \hline
100+100\;\; Hybrid & \textbf{1.6} & \textbf{5.1} \\ \hline
\end{tabular}
\caption{Results of applying the proposed hybrid basis to ZoomOut~\cite{melzi_zoomout_2019,hartwig_elastic_2023}, compared to pure LB and pure elastic bases. 
We initialize each method with a prediction from Hybrid GeomFmaps (from~\cref{tab:main_results}) and carry out the spectral upsampling from $k=30$ to $k=200$, particularly, we upsample from $k=20+10$ to $100+100$ for Hybrid ZoomOut. 
We demonstrate results on both near-isometric (FAUST) and non-isometric (SMAL) datasets.}
\label{tab:zoomout}
\end{table}

\subsection{Hybrid ZoomOut Refinement}

We provide experimental results for ZoomOut (see ~\cref{tab:zoomout}) when refining a Hybrid GeomFMaps correspondence prediction to $k=200$ in both near-isometric (FAUST) and non-isometric (SMAL) settings. 
It is worth noting that the performance of the hybrid ZoomOut algorithm relies on good initialization, similar to the original method \cite{melzi_zoomout_2019,hartwig_elastic_2023}, and the performance can vary significantly given a poor initialization.

\section{Further Ablation Studies}
\label{sec:suppl_ablation}

In this section, we present the results of several ablation studies, focusing on key design choices: separating the optimization problems from~\cref{thm:seperable_block_matrix}, the ratio of the hybrid basis, and our training strategies.

\subsection{Hybridization and Separation.}
We argue that the proposed hybrid notion is conceptually important; it is essential for optimization and training stages, including point map conversion and training learned features. 
Preventing inter-basis matchings by setting the off-diagonal blocks of a hybrid functional map to 0 serves as a strong regularization, which is also computationally efficient (see ~\cref{fig:ablation_block}).
The proposed hybrid formulation generally raises questions on the perspective of \textit{hybridization} and which components of a hybrid FM architecture could be optimized separately. 
To further support our design decisions, we provide ablation experiments for this regularization effect and analyze why training two entirely separate networks is not optimal.

\paragraph*{Separating the Optimization}
\label{sec:block_suppl_ablation_exp}
As detailed in~\cref{thm:seperable_block_matrix}, setting the off-diagonal blocks of the hybrid functional map to $\mathbf{0}$ is equivalent to solving the optimization problems separately.
Here, we show this is both important computationally and in terms of regularization.

To demonstrate its regularization effect, we conduct an experiment in ULRSSM on the FAUST dataset comparing solving a hybrid functional map with our proposed method against naively via a single solve. 
This experiment is conducted with the orthogonalized elastic basis as a proof-of-concept, as solving a full-dimensional ($k=200$) hybrid functional map from the $k^2 \times k^2$ system would be prohibitively expensive.
\begin{figure}[h!]
\centering
\includegraphics[width=\linewidth]{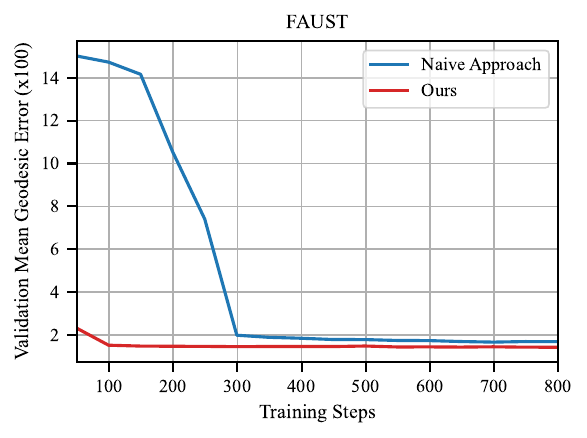}
\caption{Ablation study concerning separate optimization of the block matrix on FAUST with Hybrid ULRSSM. 
The y-axis depicts validation error, while the x-axis shows training steps.
Separating the optimization problems primarily leads to faster convergence.}
\label{fig:ablation_block}
\end{figure}\\
\noindent\textbf{Results}. The results of this ablation are depicted in~\cref{fig:ablation_block}. 
We observe that solving the two maps separately yields notably faster convergence compared to the naive approach with a marginal performance advantage.
This suggests that a block-diagonal functional map is desirable; restricting inter-basis matches leads to faster convergence.
Separately solving the optimization problems can be interpreted as a strong regularization of the off-diagonal blocks, reducing the search space.

\paragraph*{Separating the Networks.}

During the training of a deep hybrid FM, features are updated by gradients passed through both map solves.
\cref{tab:separate_network} includes an additional experiment where we instead train two entirely separated feature networks, only concatenating features during inference to obtain correspondences. The performance falls behind our approach, showing that we separate only where meaningful.

\begin{table}[h!]
\centering
\footnotesize
\renewcommand{\arraystretch}{1.2}
\begin{tabular}{ccc}
\Xhline{0.75pt}
\textbf{Geo.err.(×100)} & \textbf{Hybrid ULRSSM} & \textbf{Hybrid GeomFmaps} \\
\hline
Separate Network & 3.9 & 10.9 \\
Ours & \textbf{3.3} & \textbf{7.6} \\
\hline
\end{tabular}
\caption{Ablation study on whether separating neural networks for feature training is effective to our proposed framework. Instead of using one shared weight neural network as feature extracor, consider training two completely separate networks and concatenate the features only at test time. Note that under this scheme, our annealing training strategies in~\cref{sec:training_strategy} are no longer meaningful. Experiment conducted on SMAL dataset with our two adapted learning methods. Best result is reported. }
\label{tab:separate_network}
\end{table}

\subsection{Basis Ratio}

To further validate the effectiveness of our choice of hybridizing between the LB (Laplace-Beltrami) and Elastic eigenfunctions, we conduct extensive ablation experiments showcasing the performance of different ratios of hybridized basis.

In all these experiments, we again fix the total number of basis functions used as $k$ while replacing the  highest frequency LB basis with the Elastic eigenfunctions corresponding to the smallest eigenvalues.
This follows the intuition that the low-frequency LB basis functions enable coarse shape alignment while failing to capture fine details, while optimizing in the hybrid basis enables alignment to thin structures and high curvature details better than in the pure LBO basis.
We conduct two ablations to demonstrate such a choice; both experiments were carried out on the SMAL dataset for its challenging non-isometry and practical relevance as a stress test.

\begin{figure}[h!]
  \centering
  \includegraphics[width=0.5\textwidth]{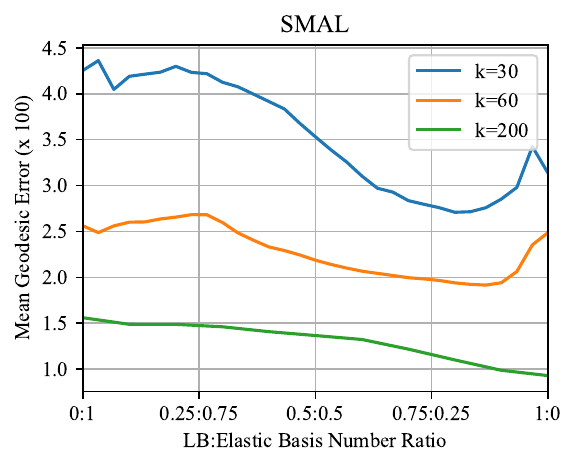}
  \caption{Point-to-point correspondence recovery in a hybrid basis, using a varying ratio of LBO and elastic eigenfunctions. We present several different total spectral resolutions k. Units are in geodesic distance (×100).}
  \label{fig:ablation_gt}
\end{figure}

\paragraph*{Point-to-Point Map Recovery}
For ground-truth map recovery, a functional map is obtained by projecting ground-truth point-to-point correspondences into the spectral domain.
Subsequently, the point-to-point correspondences are reconstructed using a nearest neighbor search, upon which the discrepancy with the ground-truth point-to-point map can be measured by geodesic error.
This simple experimental scenario enables a convenient way to measure the expressiveness of a functional map; %
In~\cref{fig:ablation_gt}, we consider the hybrid basis composed with a varying ratio of LBO and elastic basis functions and a different number of total basis functions: $k=30, 60$, and $200$. 
We measure the mean geodesic error between the ground-truth point-to-point map and the recovery from the hybrid functional maps.

\noindent\textbf{Results.} The hybridized basis can notably better represent the ground truth for $k=30$ and $60$. 
We observe an optimum of around 80\% LBO and 20\% elastic eigenfunctions. 
This phenomenon diminishes at $k=200$, suggesting the LB basis functions can indeed represent fine details with a sufficiently high number of basis functions.
However, ground-truth recovery does not necessarily represent the setting where features are learned through backpropagation of the functional map loss.
Our experiments indicate that learned pipelines cannot leverage the high-frequency LBO eigenfunctions to represent fine extrinsic details as effectively as the elastic basis functions, even with a large total number of basis functions.
We, therefore, conduct a similar ablation in the learned setting.

\begin{figure}[h!]
  \centering
  \includegraphics[width=0.5\textwidth]{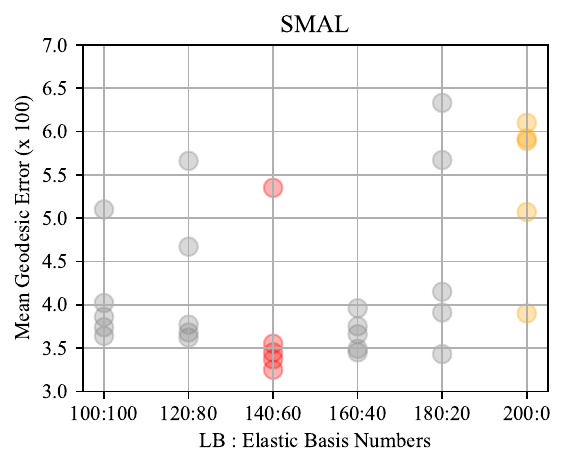}
  \caption{Hybrid ULRSSM is evaluated on SMAL using a varying ratio of LBO and elastic eigenfunctions. 
  Our basis ratio can be observed in red, with the baseline pure LBO implementation in yellow.}
  \label{fig:ablation_ratio}
\end{figure}

\paragraph*{Learned Setting.}
In~\cref{fig:ablation_ratio}, we consider the hybrid ULRSSM method with a fixing total basis number of $k = 200$ under different hybrid basis ratios with a step size of $20$. 
Due to the high-order polynomial computational increase (see ~\cref{sec:runtime}) for the $k^2 \times k^2$ system, we limit the number of elastic basis functions to less than $100$ (Point Map Recovery from GT also indicates inferior performance outside of this regime). 
We run each experiment 5 times to eliminate inherent noise and report all results.

\noindent\textbf{Results.} Here, we observe that a ratio of around 140:60 is optimal; 
the hybridized basis (red) shows consistent performance improvements over the baseline (orange) and other basis ratios.

\subsection{Training Strategies}
\label{sec:training_strategy}
Training a reliable shape correspondence estimation pipeline through hybrid functional maps involves several key modeling decisions.
Both the linearly increasing scheduler for the elastic loss during training and normalizing factors for both Laplace-Beltrami (LB) and elastic losses play a large role in the obtained performance increases.

As mentioned in~\cref{sec:method}, we observed the elastic basis functions are not robust to uninitialized features.
Easing in the elastic loss after feature initialization in the LBO basis mitigates convergence to undesirable local minima. 
Furthermore, the loss of each component in the hybrid functional map is normalized according to the number of matrix elements for this component, an important hyperparameter to balance the two blocks.
During the ablation studies presented in~\cref{fig:ablation_optimizatoin}, we selectively eliminate each one of these factors from our model and measure the mean geodesic error.
We further demonstrate that fine-tuning from a pre-trained LBO checkpoint is ineffective, likely converging to local minima.
\begin{figure*}[t!]
  \centering
  \vspace{-0.5cm}
  \begin{subfigure}[b]{0.45\textwidth}
    \centering
    \includegraphics[width=\textwidth]{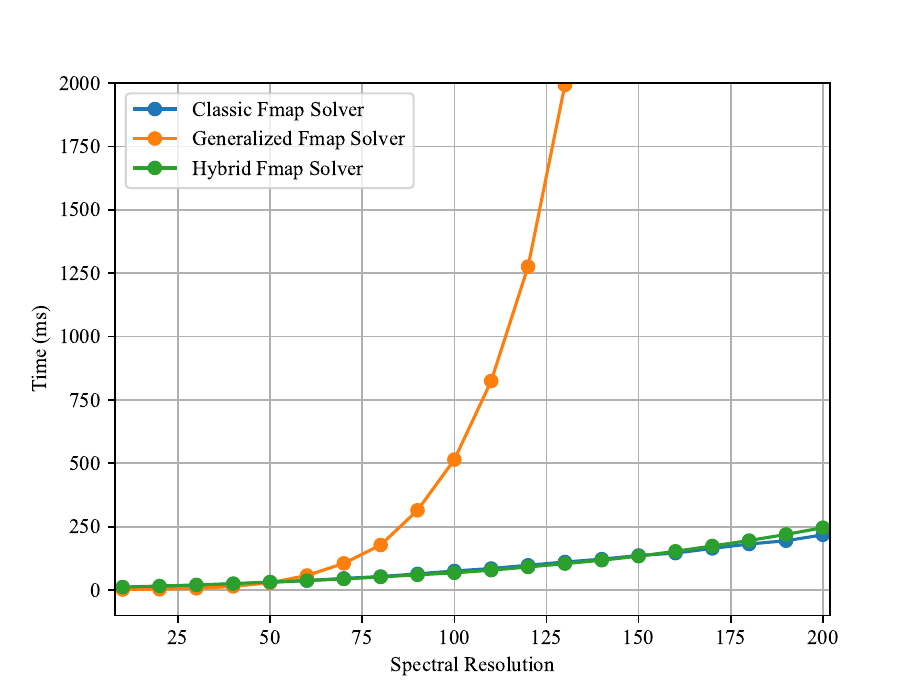}
    \label{fig:fmap_time}
  \end{subfigure}
  \hfill
  \begin{subfigure}[b]{0.45\textwidth}
    \centering
    \includegraphics[width=\textwidth]{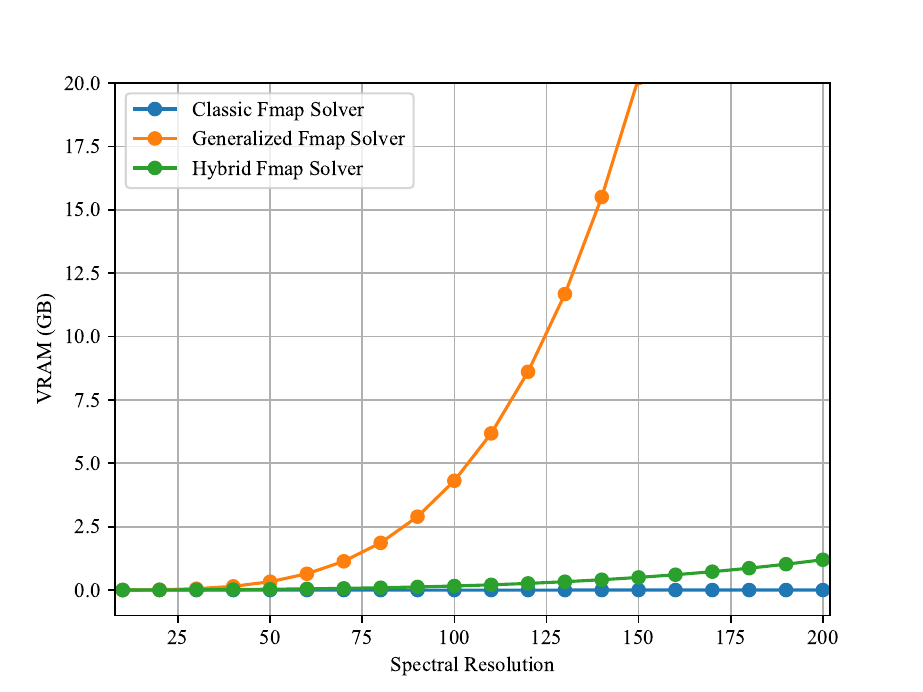}
    \label{fig:fmap_vram}
  \end{subfigure}
  \vspace{-0.5cm}
  \caption{The computation cost comparison of deep functional map solvers between classic, generalized, and hybrid solutions: runtime per solve (left), and VRAM consumption (right). Results were obtained in PyTorch on an NVIDIA A40.
}
  \label{fig:fmap_cost_combined}
\end{figure*}
\noindent\textbf{Results.} The results in ~\cref{fig:ablation_optimizatoin} show that each component is indeed important for our final model. 
Fine-tuning from a checkpoint or training without normalization yields inferior results. 
Furthermore, except for a single outlier, our approach converges to a significantly lower minimum than learning without the linear-annealing strategy.

\begin{figure}[h!]
  \centering
  \includegraphics[width=0.5\textwidth]{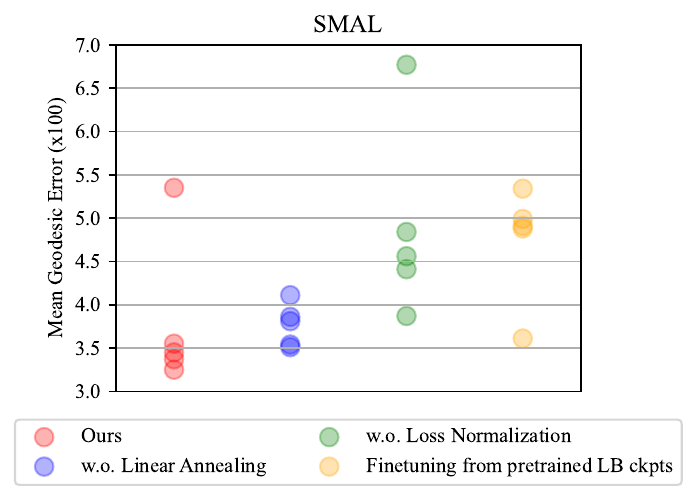}
  \caption{Ablation study of optimization strategies using Hybrid ULRSSM on SMAL. Five random runs are shown for each training setting.}
  \label{fig:ablation_optimizatoin}
\end{figure}

\begin{table}[htbp]
  \centering
  
  \renewcommand{\arraystretch}{1.5} %
  \begin{tabular}{c c} 
    \hline
    \textbf{Method} & \textbf{Runtime} \\ \hline 
    ULRSSM & 610.70 $\pm$ 45.20 ms\\ \hline 
    Hybrid ULRSSM & 623.19 $\pm$ 32.03 ms\\ \hline 
  \end{tabular}
    \caption{
    Comparison of per-iteration runtime on SMAL over 100 training iterations (mean $\pm$ st. dev). 
    }
    \label{tab:runtime}
\end{table}
\section{Runtime Analysis}
\label{sec:runtime}

We provide our runtime analysis for the Hybrid ULRSSM method in SMAL dataset under Table \ref{tab:runtime} and \cref{fig:fmap_cost_combined}.
Results are obtained on an NVIDIA A40.
Our hybrid adaptation incurs minimal runtime overhead while yielding significant performance gains despite needing to solve an expanded $k^2 \times k^2$ system.
This can be explained by analyzing the complexity.
Assuming the complexity of solving a linear system for an $k \times k$ matrix is $\mathcal{O}(k^3)$, solving the combined optimization problem costs $\mathcal{O}(k^4)$ flops (since we solve $k$ separate $k \times k$ systems.
Solving the separate optimization problem costs $\mathcal{O}((k-l)^4 + l^6)$ flops, which for the total spectral resolution $k = 200$ and elastic eigenfunctions $l = 60$ is only one order of magnitude larger.

\section{Additional Figures}
\label{sec:suppl_basis_embedding}

We provide additional visualization of the basis embeddings in~\cref{fig:sup_embedding}. The figure depicts 3D embeddings of the first three basis functions on the shape in different combinations.
The visualization suggests that while the elastic basis lacks global structure, it naturally extends the smooth approximation of the LBO basis to incorporate geometric details (creases) when hybridized. \\
Additionally, we provide further comparison of correspondence visualizations by transferring vertex positions from the source (horse) to the target shape (hippo), the results of LB, Elastic, and Hybrid basis functions were compared by encoding and recovering ground truth correspondences through functional map representations. As illustrated in \cref{fig:sup_vertex_map}, the hybrid functional map representation notably encodes reasonable correspondences already at a spectral resolution of 60, whereas the other two methods still face challenges even at a resolution of 200.
\section{Additional Qualitative Results}
\label{sec:suppl_qualitative}

We provide additional qualitative results in~\cref{fig:sup_quali_full} on each dataset. 
We evaluate and visualize the best and worst predictions of ULRSSM in the proposed hybrid basis.

\end{document}